\DeclareMathOperator*{\argmax}{arg\,max}
\def\balpha{\mbox{{\boldmath $\alpha$}}}
\def\bbeta{\mbox{{\boldmath $\beta$}}}
\def\bdelta{\mbox{{\boldmath $\delta$}}}
\def\beps{\mbox{{\boldmath $\epsilon$}}}
\def\bepsilon{\mbox{{\boldmath $\epsilon$}}}
\def\bnu{\mbox{{\boldmath $\nu$}}}
\def\bxi{\mbox{{\boldmath $\xi$}}}
\def\bdelta{\mbox{{\boldmath $\delta$}}}
\def\mL{{\mathcal L}}
\def\mP{{\mathcal P}}
\def\mT{{\mathcal T}}
\DeclareMathAlphabet\mathbfcal{OMS}{cmsy}{b}{n}
\def\0{{\bf 0}}
\def\1{{\bf 1}}
\def\bW{{\bf W}}
\def\bx{{\bf x}}
\def\by{{\bf y}}
\def\mmE{{\mathbb E}}
\def\mmR{{\mathbb R}}
\def\bx{{\bf x}}
\def\by{{\bf y}}
\def\bW{{\bf W}}
\def\balpha{{\bm \alpha}}
\def\bbeta{{\bm \beta}}
\def\eg{\emph{e.g.}} 
\def\ie{\emph{i.e.}}
\newtheorem{prop}{Proposition}
\newtheorem{ass}{Assumption}
    \crefname{section}{Sec.}{Secs.}
    \Crefname{section}{Section}{Sections}
    \Crefname{table}{Table}{Tables}
    \crefname{table}{Tab.}{Tabs.}
\newcommand{\ourname}{ZOA}
\title{Test-Time Model Adaptation for Quantized Neural Networks}
\author{Zeshuai Deng}
\affiliation{
  \institution{South China University of Technology}
  \city{Guangzhou}
  \country{China}
}
\email{sedengzeshuai@mail.scut.edu.cn}
\author{Guohao Chen}
\affiliation{
  \institution{Nanyang Technological University}
  \city{Singapore}
  \country{Singapore}
}
\email{guohao.chen@ntu.edu.sg}
\author{Shuaicheng Niu}
\affiliation{
  \institution{Nanyang Technological University}
  \city{Singapore}
  \country{Singapore}
}
\email{shuaicheng.niu@ntu.edu.sg}
\author{Hui Luo}
\affiliation{
  \institution{Institute of Optics and Electronics, Chinese Academy of Sciences}
  \city{Chengdu}
  \country{China}
}
\email{luohui19@mails.ucas.ac.cn}
\author{Shuhai Zhang}
\affiliation{
  \institution{South China University of Technology}
  \city{Guangzhou}
  \country{China}
}
\email{shuhaizhangshz@gmail.com}
\author{Yifan Yang}
\affiliation{
  \institution{South China University of Technology}
  \city{Guangzhou}
  \country{China}
}
\email{youngyif1@gmail.com}
\author{Renjie Chen}
\affiliation{
  \institution{South China University of Technology}
  \city{Guangzhou}
  \country{China}
}
\email{202410190283@mail.scut.edu.cn}
\author{Wei Luo}
\affiliation{
  \institution{South China Agricultural University}
  \city{Guangzhou}
  \country{China}
}
\email{cswluo@scau.edu.cn}
\author{Mingkui Tan}
\affiliation{
  \institution{South China University of Technology}
  \city{Guangzhou}
  \country{China}
}
\email{mingkuitan@scut.edu.cn}
\begin{document}

\begin{abstract}
Quantizing deep models prior to deployment is a widely adopted technique to speed up inference for various real-time applications, such as autonomous driving. However, quantized models often suffer from severe performance degradation in dynamic environments with potential domain shifts and this degradation is significantly more pronounced compared with their full-precision counterparts, as shown by our theoretical and empirical illustrations.
To address the domain shift problem, test-time adaptation (TTA) has emerged as an effective solution by enabling models to learn adaptively from test data. 
Unfortunately, existing TTA methods are often impractical for quantized models as they typically rely on gradient backpropagation—an operation that is unsupported on quantized models due to vanishing gradients, as well as memory and latency constraints.
In this paper, we focus on TTA for quantized models to improve their robustness and generalization ability efficiently. We propose a continual zeroth-order adaptation (ZOA) framework that enables efficient model adaptation using only two forward passes, eliminating the computational burden of existing methods.
Moreover, we propose a domain knowledge management scheme to store and reuse different domain knowledge with negligible memory consumption, reducing the interference of different domain knowledge and fostering the knowledge accumulation during long-term adaptation.
Experimental results on three classical architectures, including quantized transformer-based and CNN-based models, demonstrate the superiority of our methods for quantized model adaptation. On the quantized W6A6 ViT-B model, our ZOA is able to achieve a 5.0$\%$ improvement over the state-of-the-art FOA on ImageNet-C dataset. The source code is available at \href{https://github.com/DengZeshuai/ZOA.git}{https://github.com/DengZeshuai/ZOA}.
\end{abstract}

\begin{CCSXML}
<ccs2012>
   <concept>
       <concept_id>10010147.10010257.10010282.10010284</concept_id>
       <concept_desc>Computing methodologies~Online learning settings</concept_desc>
       <concept_significance>500</concept_significance>
       </concept>
 </ccs2012>
\end{CCSXML}

\ccsdesc[500]{Computing methodologies~Online learning settings}

\keywords{Test-Time Adaptation; Quantized Neural Network; Zeroth-Order Optimization; Domain Knowledge Management}

\maketitle

\begin{figure}
\centering
\includegraphics[width=1.0\linewidth]{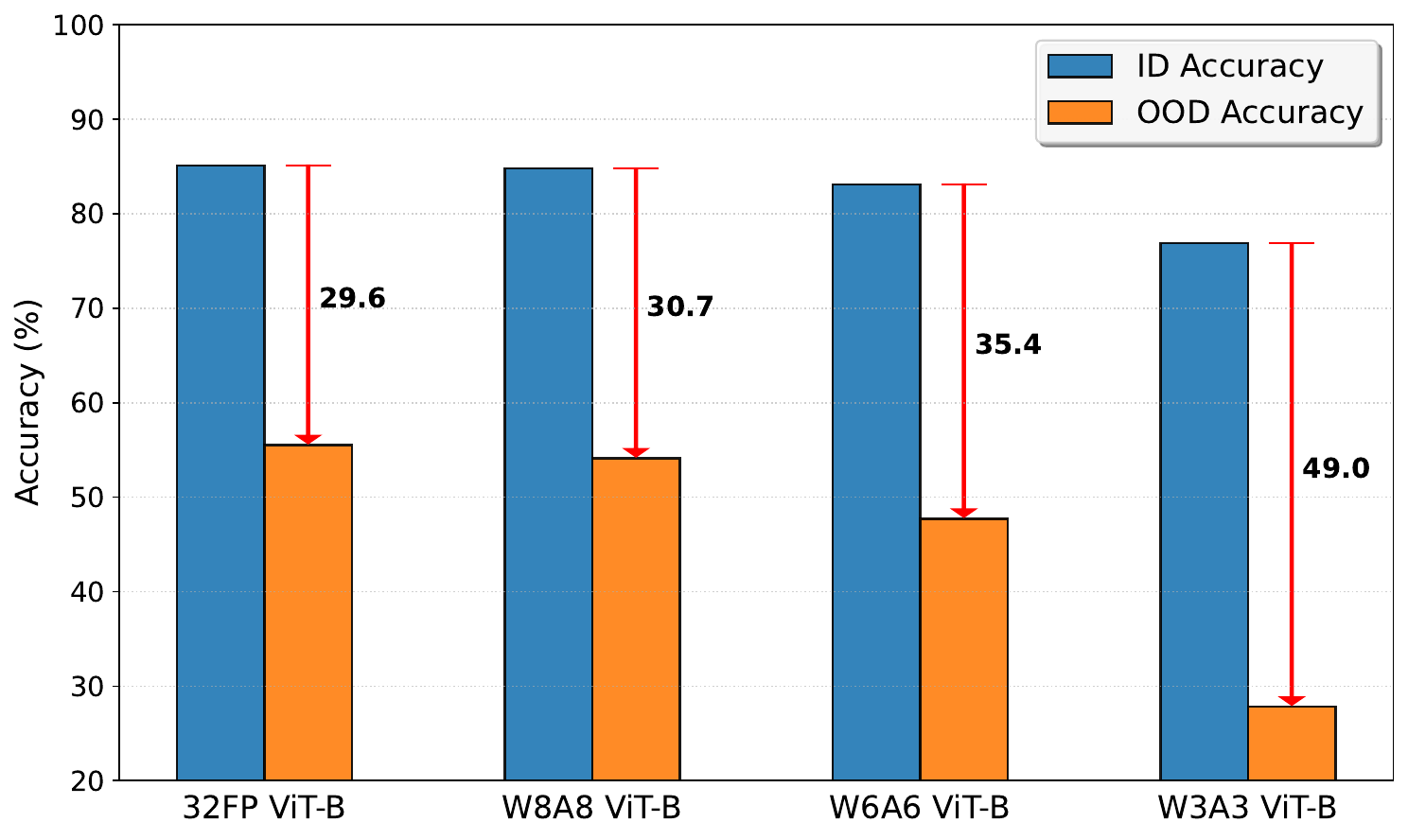}
\caption{
Quantized models often exhibit substantial performance degradation on out-of-distribution (OOD) data compared to their full-precision counterparts.
The {red arrows} represent the accuracy gap between the results of models on in-distribution (ID) data (ImageNet) and OOD data (ImageNet-C).
``32FP'' denotes the 32-bit floating-point ViT-B model, and ``WNAN'' indicates the N-bit quantized ViT-B variant.
}
\label{fig:motivation}
\end{figure}

\section{Introduction}
\label{sec:intro}
The common paradigm in deploying deep neural network models involves quantizing full-precision models into low-bit precision, \eg, 8-bit precision, before deployment using training-aware quantization~\cite{gong2019differentiable, esser2019learned, jung2019learning, li2020additive} or post-training quantization~\cite{yang2019quantization, nagel2020up, wei2022qdrop, wu2024adalog}.
This workflow has become a standard practice in multimedia applications, aiming to accelerate inference and improve response speed across a wide range of scenarios, including those involving low-power edge devices or latency-sensitive contexts. However, after deployment, the environments may dynamically change with domain shifts, such as changes in lighting variations and sensor noise~\cite{wang2021tent, niu2022efficient, wang2022continual}, and under such domain shifts, these models may suffer from severe performance degradation.

In this paper, we identify that quantized models are more sensitive to domain shifts compared with their full-precision counterparts, exhibiting significantly greater performance degradation when encountering such shifts. As illustrated in~\cref{fig:motivation}, the accuracy gap between in-distribution (ID) and out-of-distribution (OOD) data widens markedly as the bit-width of the quantized model decreases. This is because the quantization process inevitably introduces loss errors when facing out-of-distribution perturbations, and this loss increases as the bit-precision decreases, \textit{see} our analyses in \cref{sec:sensitivity}. However, existing quantization methods typically optimize the quantized models on source ID data, overlooking their performance on OOD ones~\cite{lin2018defensive, xiao2023robustmq}. Therefore, there remains an urgent demand to efficiently adapt quantized models to new environments.

To address the domain shift issue, test-time adaptation (TTA)~\cite{wang2021tent, niu2022efficient, zhang2022memo, niu2023towards, lee2024entropy} has emerged as a promising paradigm by adapting deployed models to out-of-distribution data during test time. Based on whether they rely on backpropagation (BP), existing methods can be broadly categorized into two groups: (i) BP-based methods, such as entropy minimization~\cite{wang2021tent, niu2022efficient}, prediction consistency maximization~\cite{zhang2022memo}, and feature distribution alignment~\cite{mirza2023actmad}; and (ii) BP-free methods like calibrating the statistics of batch normalization layers ~\cite{nado2020evaluating, schneider2020improving} and correcting the predicted logits~\cite{iwasawa2021test}. However, effectively supporting various quantized models remains challenging for these approaches, making them impractical for scenarios such as deployment on edge devices. The reasons are as follows.

\textit{BP-based TTA} methods typically~\cite{wang2021tent,niu2022efficient,tan2025uncertainty} focus on full-precision models and rely on BP for model adaptation, which requires the function of model inference to be continuous and differentiable.
However, the forward process of quantized models exhibits discrete characteristics and lacks differentiability due to their extremely low-bitwidth parameter representations (\eg, 4-bit or 8-bit precision).
Besides, gradient computation through BP entails substantial memory overhead for retaining intermediate activation values.
While the edge devices, such as FPGAs and smartphones, are often resource-limited, making it unaffordable to perform these computationally expensive BP-based methods.
Therefore, it is non-trivial to directly use these methods for efficient quantized model adaptation.

\textit{BP-free TTA} methods~\cite{iwasawa2021test, boudiaf2022parameter, niu2024testtime} adapt the deployed models using forward passes only. However, conventional methods like T3A~\cite{iwasawa2021test} and LAME~\cite{boudiaf2022parameter} are learning-free. Without updating the core parameters of models (such as parameters of normalization layers), these methods often show limited learning capabilities(see results in \cref{tab:long_term}). Recently, the learning-based method FOA~\cite{niu2024testtime} has been proposed, demonstrating much better performance compared with learning-free approaches. However, it often requires many forward passes per test sample to achieve satisfactory results, which may be impractical for real-time applications. Moreover, FOA is specifically designed for transformer architectures, limiting its applicability to a broader range of model types. Thus, achieving real-time, resource-efficient adaptation for quantized models in long-term, dynamically changing environments remains an open and challenging problem.

In this paper, we propose a zeroth-order test-time adaptation algorithm (\ourname) that efficiently adapts quantized models using only two forward passes. 
Specifically, we design a continual domain knowledge learning method that accumulates knowledge across different domains.
To reduce the interference between different domains, we design a domain knowledge management scheme to store and reuse the learned knowledge. 
With a set of learnable coefficients to aggregate the knowledge from different domains, our \ourname ~efficiently enhances the reliability of quantized models in long-term adaptation.
We evaluate our \ourname ~and existing methods across three classical architectures and different quantization bit-precision.
Experimental results demonstrate the effectiveness of our methods on quantized model adaptation.
Overall, the contribution of our work is summarized as follows:
\begin{itemize}[leftmargin=4mm]
\item 
We identify that the quantized models exhibited higher sensitivity to domain shift than their full-precision counterparts. 
To achieve reliable AI applications under dynamic conditions on edge devices, we design a continual zeroth-order adaptation framework, namely \ourname, to efficiently adapt the quantized models during testing using only two forward passes per sample.
\item We introduce a zeroth-order continual domain knowledge learning scheme. This method reduces cross-domain knowledge interference and enables the accumulation and reuse of historical adaptation knowledge for more effective forward-only TTA. 
We further propose a domain knowledge management strategy to ensure the computational and memory efficiency of our \ourname.
\item Experiments on three classical architectures across different bit-precisions demonstrate that our \ourname ~is able to efficiently adapt the quantized models by storing and reusing the learned domain knowledge to boost the long-term test-time adaptation. 
\end{itemize}

\section{Related Works}
\subsection{Test-Time Adaptation}
Test-time Adaptation (TTA) aims to adapt a pre-trained model to unlabeled test data to handle the domain shift between training and testing data. According to whether the model adaptation relies on backpropagation, existing methods can be categorized into:
1)~\textit{Backpropagation-based methods}: Early TTA methods optimize a source model via backpropagation using
an extra self-supervision task at testing,
such as rotation prediction~\cite{sun2020test}, contrastive learning~\cite{liu2021ttt++,bartler2022mt3}, reconstruction learning~\cite{gandelsman2022test,deng2023efficient}, etc. However, self-supervised tasks are essentially proxy tasks, which usually alter the training process, and this solution is not always feasible in practice. To address this issue, fully test-time adaptation methods have been proposed, which update models using unlabeled test data through unsupervised learning objectives, including entropy minimization~\cite{wang2021tent,niu2022efficient,tan2025uncertainty}, energy alignment~\cite{yuan2024tea,choiad2024aptive}, and prediction consistency maximization~\cite{fleuret2021test,zhang2022memo,wang2022continual,chen2022contrastive}, etc. Nonetheless, the aforementioned methods suffer from computational and memory inefficiency and limited application scenarios, \eg, can not be applied to quantized models, due to their reliance on backpropagation. \linebreak
2)~\textit{Backpropagation-free methods}: A common approach for BP-free TTA is to calibrate the statistics of batch normalization (BN) using the mean and variance computed over the test data~\cite{schneider2020improving,wang2021tent,mancini2018kitting}. Nevertheless, this method requires multiple test data to calculate statistics and assumes a balanced class distribution within a batch. To address this, later studies adopt data augmentations~\cite{zhang2022memo} for single-sample BN calibration or a class-wise sample bank~\cite{gong2022note} for the imbalanced data stream. However, without optimizing the core parameters of models, these methods often show limited online learning capabilities.
Recently, a forward-optimization adaptation (FOA)~\cite{niu2024testtime} has been proposed to optimize quantized models without requiring backpropagation. 
However, FOA requires a long adaptation time—up to 28 forward passes per sample, which is impractical for real-time applications. Instead, our \ourname ~seeks to use only two forward passes per sample to adapt quantized models continually, boosting TTA's practicality in various resource-limited contexts.

\subsection{Zeroth-Order Optimization}
Zeroth-order Optimization (ZO) leverages forward passes to estimate gradients without backpropagation. 
Recent advances in ZO have significantly expanded its applications in machine learning~\cite{liu2020primer} and natural language processing~\cite{malladi2023fine}, particularly in scenarios where gradient information is unavailable or impractical to obtain~\cite{deng2022rlprompt}. Motivated by the need for black-box adaptation in NLP, methods like BBT~\cite{sun2022black} and BBTv2~\cite{sun2022bbtv2} have employed evolutionary strategies such as CMA-ES~\cite{hansen2003reducing} to optimize proprietary models without access to gradients, while RLPrompt~\cite{deng2022rlprompt} leverages reinforcement learning for prompt tuning. However, these approaches often face challenges with high variance and instability~\cite{liu2020primer}, especially in vision tasks. To address these limitations, recent work has focused on improving gradient estimation accuracy through techniques like two-sided approximations~\cite{oh2023blackvip} and reducing variance in ZO fine-tuning of large language models by sparse parameter perturbations~\cite{liu2024sparse}, and increased batch sizes~\cite{jiang2024zo} and tensorized adapters~\cite{yang2024adazeta}. Additionally, efforts to scale up ZO optimization have included integrating historical data~\cite{cheng2021convergence}, and reusing intermediate features~\cite{chen2024deepzero} to enhance efficiency and convergence rates. 
These advancements highlight the growing potential of ZO methods in handling complex, large-scale machine learning problems while maintaining computational efficiency and adaptability.
In this paper, we incorporate the advantage of zeroth-order optimization to update the core parameters of the quantized neural networks for test-time adaptation.

\begin{figure*}
    \centering
    \includegraphics[width=1.0 \linewidth]{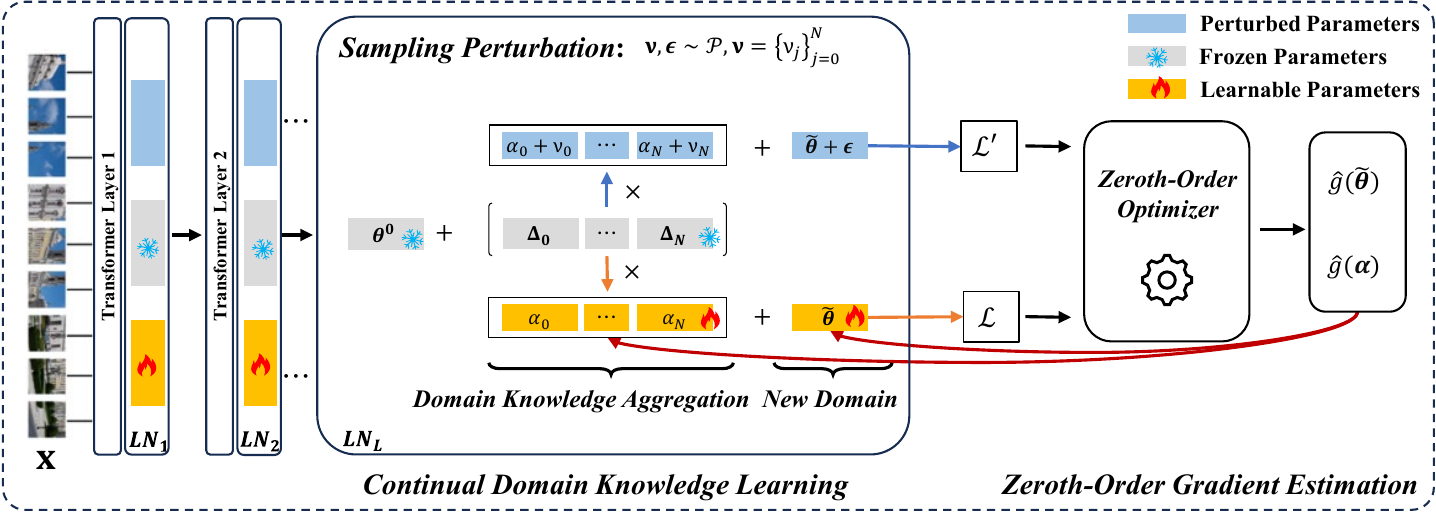}
    \caption{The overall framework of \ourname. We perturb the learnable parameters and the aggregation coefficients using the random perturbation vectors ${\bm \epsilon}$ and $\bnu$, which are sampled from a predefined distribution $\mP$.
    The gradient is estimated based on the loss $\mathcal{L}'$ and $\mathcal{L}$,
    {where $\mathcal{L}' = \mathcal{L}(\bx; \tilde{\bm \theta} + c {\bm \epsilon}, {\bm \alpha} + c {\bnu})$ is the loss related to perturbed parameters and $\mathcal{L} = \mathcal{L}(\bx; \tilde{\bm \theta}, {\bm \alpha})$ is related to learnable parameters.}
    The yellow blocks represent the learnable parameters, including $\balpha$ and $\tilde{\bm \theta}$. 
    The blue blocks represent the parameters perturbed by random vectors $\bm\nu$ and $\bm\epsilon$.
    The gray blocks represent the frozen parameters that would not be updated. 
    }
    \label{fig:overall}
\end{figure*}

\section{Motivation and Problem Statement}
In this section, we first theoretically demonstrate that QNNs are highly vulnerable to distribution shifts, and then revisit the limitations of existing TTA solutions in adapting quantized models.

\subsection{Sensitivity of QNN with Distribution Shift}
\label{sec:sensitivity}

We theoretically analyze the impact of out-of-distribution (OOD) perturbations on QNNs and provide justification for the necessity of TTA strategies in such scenarios, especially under low-bit precision.
 
\begin{prop} \label{prop:sensitivity}
Considering linear models, let $n$ denote the bit-precision of quantized models, for out-of-distribution (OOD) input perturbations $\delta$,  the quantization-induced loss difference $\Delta \mathcal{L} := \hat{\mathcal{L}}(\mathbf{x}+\delta) - \mathcal{L}(\mathbf{x}+\delta)$ satisfies:
\begin{equation} \label{eq:loss_sensitivity}
\Delta \mL >0, ~~ \text{and} ~~ \Delta \mL \propto \frac{1}{2^{2n}},
\end{equation}
where $\hat{\mathcal{L}}$ and $\mathcal{L}$ denote the MSE losses of quantized and full-precision models, respectively.
\end{prop}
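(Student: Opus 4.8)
The plan is to reduce the claim to the classical additive-noise model of uniform quantization. First I would fix notation for the linear model: write the full-precision prediction as $f(\mathbf{x}) = \mathbf{w}^\top \mathbf{x}$ and the $n$-bit quantized prediction as $\hat f(\mathbf{x}) = \hat{\mathbf{w}}^\top \mathbf{x}$ with $\hat{\mathbf{w}} = \mathbf{w} + \mathbf{e}$, where $\mathbf{e}$ is the weight-quantization error. For uniform quantization over a dynamic range $R$ calibrated on in-distribution data, the step size is $\Delta_q = R/(2^n-1)$, and I would model $\mathbf{e}$ as zero-mean, coordinate-wise i.i.d., independent of the input, with variance $\sigma_q^2 = \Delta_q^2/12 \propto 2^{-2n}$ --- the standard quantization-noise assumption. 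The MSE losses at the perturbed input are then $\mathcal{L}(\mathbf{x}+\delta) = (\mathbf{w}^\top(\mathbf{x}+\delta) - y)^2$ and $\hat{\mathcal{L}}(\mathbf{x}+\delta) = (\hat{\mathbf{w}}^\top(\mathbf{x}+\delta) - y)^2$.

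Next I would expand the loss gap. Writing $r := \mathbf{w}^\top(\mathbf{x}+\delta) - y$ for the full-precision residual, we have $\hat{\mathbf{w}}^\top(\mathbf{x}+\delta) - y = r + \mathbf{e}^\top(\mathbf{x}+\delta)$, so
\begin{equation*}
\Delta\mathcal{L} = 2\,r\,\mathbf{e}^\top(\mathbf{x}+\delta) + \big(\mathbf{e}^\top(\mathbf{x}+\delta)\big)^2 .
\end{equation*}
Taking the expectation over the quantization noise and using that $\mathbf{e}$ is zero-mean and independent of $\mathbf{x},\delta$, the linear cross term vanishes, leaving
\begin{equation*}
\mmE_{\mathbf{e}}[\Delta\mathcal{L}] = \sigma_q^2\,\|\mathbf{x}+\delta\|_2^2 = \tfrac{1}{12}\,\Delta_q^2\,\|\mathbf{x}+\delta\|_2^2 .
\end{equation*}
This is strictly positive whenever $\mathbf{x}+\delta \neq \mathbf{0}$, giving $\Delta\mathcal{L}>0$; and since $\Delta_q^2 = R^2/(2^n-1)^2$ with $R$ fixed, it scales like $2^{-2n}$, i.e.\ $\Delta\mathcal{L} \propto 1/2^{2n}$. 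To make the OOD role of $\delta$ explicit I would note that $R$ is set from ID statistics, so a domain shift both inflates the amplification factor $\|\mathbf{x}+\delta\|_2^2$ relative to the typical ID norm and, once activations exceed $R$, adds a saturation term --- so the degradation is strictly worse on OOD data, while still governed by the $2^{-2n}$ factor.

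The main obstacle, as usual with this kind of statement, is the status of the additive-noise model: the assumption that $\mathbf{e}$ is zero-mean, i.i.d., and input-independent is heuristic (exact only under Widrow-type conditions), and making the proportionality $\propto 2^{-2n}$ rigorous requires fixing $R$ and controlling how the clipping/saturation regime behaves for genuinely large $\delta$. A fully careful version would also decide whether to include activation quantization --- which only strengthens the conclusion, contributing an extra term of the same $2^{-2n}$ order --- and would phrase the result as an expectation over $\mathbf{e}$ rather than pointwise, since the raw $\Delta\mathcal{L}$ can be negative when the linear cross term dominates. I expect everything else, namely the algebra in the two displays, to be entirely routine.
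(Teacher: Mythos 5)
Your proposal is correct and follows essentially the same route as the paper: both model the weight quantization error as zero-mean uniform noise with variance $\Delta_q^2/12 \propto (2^n-1)^{-2}$ (the paper's Proposition on $Var(\Delta W_{ij}) = a^2/(3(2^n-1)^2)$), expand the quantized squared residual around the full-precision one, kill the linear cross term in expectation, and identify the surviving quadratic term as quantization variance times the input second moment. The only cosmetic difference is that you condition on the input and average only over the quantization noise (getting $\sigma_q^2\|\mathbf{x}+\delta\|_2^2$), whereas the paper additionally averages over $\mathbf{x}$, $\delta$, and the label noise to get $Var(\Delta W)\cdot\mathrm{Tr}(\Sigma_{\mathbf{x}}+\Sigma_{\delta})$; your explicit caveat that the pointwise $\Delta\mathcal{L}$ can be negative and the claim must be read in expectation is well taken and applies equally to the paper's formulation.
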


The proof of Proposition \ref{prop:sensitivity} is provided in Appendix A. This result highlights two key insights: 1) Quantization always increases the loss under OOD perturbations, \ie, $\Delta \mL >0$;
and 2) This sensitivity grows exponentially as the bit-precision  $n$ decreases, \ie, $\Delta \mL \propto \frac{1}{2^{2n}}$.\linebreak
Our empirical observations in \cref{fig:motivation} are also aligned with our theoretical analysis. Overall, these findings underscore a critical limitation of QNNs and motivate the need for an effective TTA solution to enhance their robustness against distribution shifts.

\subsection{Test-Time Adaptation}
Let $f_{\bm \theta}(\cdot)$ be the model trained on the source training dataset $D_s = \{\bx_i, \by_i \}^{N_s}_{i=1}$ and $\bx_i \sim P(\bx)$.
During testing, test samples can be drawn from a shifted and dynamically changing distribution $Q(\bx)$, where $Q(\bx) \neq P(\bx)$, causing performance degradation in $f_{\bm \theta}(\cdot)$ predictions.
To adapt the pre-trained model $f_{\bm \theta}(\cdot)$ to the target domain, conventional TTA methods~\cite{wang2021tent, niu2022efficient} seek to optimize the model using some un/self-supervised learning objective on test samples:
\begin{equation} \label{eq:tta}
\min_{\Tilde{\bm \theta}} \mathcal{L}(\bx; {\bm \theta}), ~ \bx \sim Q(\bx),
\end{equation}
where $\Tilde{\bm \theta}\subseteq{\bm \theta}$ denotes the learnable parameters involved in TTA, and the test-time objective $\mathcal{L}(\cdot)$ is often formulated as entropy minimization~\cite{wang2021tent,niu2022efficient, wang2022continual}, activation alignment~\cite{mirza2023actmad,niu2024testtime}, and \textit{etc}.

Nevertheless, existing TTA methods typically require gradient-based updates through backpropagation, posing a fundamental limitation when applied to quantized models that do not support backpropagation. 
Recently, FOA~\cite{niu2024testtime} circumvents backpropagation by employing an evolutionary strategy to optimize input prompts. However, it 1) incurs substantial computational cost—requiring up to 28 forward passes per sample—and 2) assumes a low-dimensional adaptation space, making it less flexible for adapting deep quantized models with high-dimensional parameters.
Motivated by adaptation efficiency and applicability on various QNNs, in this paper, we seek to propose a novel TTA approach that effectively adapts a quantized model using only \textit{two forward passes} per sample—one forward pass for standard inference and a single extra forward pass for TTA—without assuming a low-dimensional adaptation space.

\begin{algorithm}[t] 
    \caption{The pipeline of our \ourname.}
    \label{alg:pipeline}
    \KwIn{Test samples $D_t=\{\bx_i\}$, source model $f_{{\bm \theta}^0}(\cdot)$, distribution $\mP$, learning rate $\eta_a$ and $\eta_p$.
    }
    
    Initialize $\mT = \{ \textbf{0} \}$, $\tilde{\bm \theta} =\textbf{0}$ and $\bm\alpha=\textbf{0}$. \\
    \For{$\bx \in D_t$} {
        // \textit{First forward pass.} \\
        Forward to obtain $\hat{\by}_i$ and compute $\mathcal{L}(\bx; \tilde{\bm \theta}, {\bm \alpha})$. \\
        Sample perturbations $\bnu$ and $\beps$ from distribution $\mP$. \\
        // \textit{Second forward pass.} \\
        Forward to compute $\mathcal{L}(\bx; \tilde{\bm \theta} + c {\bm \epsilon}, {\bm \alpha} + c {\bm\nu})$. \\
        Estimate gradients $\hat{g}(\tilde{\bm \theta})$  and $\hat{g}(\bm\alpha)$  based on \cref{eq:grad}. \\
        Update $\tilde{\bm \theta}$ using $\hat{g}(\tilde{\bm \theta})$ and $\eta_p$. \\
        Update $\bm\alpha$ using $\hat{g}(\balpha)$ and $\eta_a$. \\
        
        \If{Domain changes}{
        Store current parameters $\Delta_t$ into $\mT$ using \cref{eq:store}. \\
        Add new learnable parameters $\tilde{\bm \theta}'$ using \cref{eq:init_theta}.
        }
        \If{$|\mT| > N$}{
        Remove redundant parameters via \cref{eq:sim} and (\ref{eq:discard}).
        }
    }
    \KwOut{
    The predictions $\{ \hat{\by}_i \}$. 
    }
\end{algorithm}

\section{Continual Zeroth-Order Adaptation for QNNs}
In this paper, we propose a novel continual Zeroth-Order Adaptation (ZOA) framework, which is designed to efficiently adapt a quantized model with only two forward passes per sample to boost TTA practicality.
As shown in \cref{fig:overall}, ZOA comprises two key designs:
1)~A continual domain knowledge learning scheme based on zeroth-order optimization, which learns and reuses domain knowledge without forgetting for more effective TTA using two forward passes (c.f. \cref{sec:domain_learning}); and 2)~A domain knowledge management strategy that preserves diverse, and informative domain knowledge to reduce memory overhead while maintaining TTA performance (c.f. \cref{sec:domain_management}). We summarize the pseudo-code of ZOA in~\cref{alg:pipeline}.

\subsection{Continual Domain Knowledge Learning}
\label{sec:domain_learning}
Unlike conventional TTA methods that update the model via backpropagation, we aim to achieve both computationally and memory-efficient TTA—requiring only two forward passes per test sample. To this end, one direct solution is using the Covariance Matrix Adaptation Evolution Strategy~\cite{hansen2016cma} (CMA-ES), as in FOA~\cite{niu2024testtime}. However, this is inefficient and ineffective in our setting, due to: 1) CMA-ES relies solely on the rank of candidate solutions but ignores the detailed fitness (target loss) value, failing to provide sufficient learning signals when only two forward passes per sample are available, as in Fig. A of Appendix;  2) CMA-ES assumes a low-dimensional optimization space (\eg, prompt learning on ViT~\cite{niu2024testtime}), restricting its efficacy on quantized models with high-dimensional parameters.

In this paper, we seek to exploit the exact loss values of candidate solutions to deliver richer learning signals for more efficient and effective forward-only adaptation, and thus introduce a zeroth-order adaptation method. Additionally, we devise a continual learning scheme that accumulates learned knowledge without forgetting, thereby further boosting the efficiency and effectiveness of subsequent forward-only adaptations under a very limited number (\ie, 2) of forward passes. We depict them in the following.

\noindent \textbf{Zeroth-Order Gradient Estimation.} To efficiently update the QNN with two forward passes, we explore the zeroth-order optimization SPSA~\cite{spall1992multivariate} in TTA, which leverages the exact loss values $\mathcal{L}(\cdot)$ of solutions for gradient estimation. 
Specifically, the one-sided averaged gradient estimator of SPSA~\cite{liu2018zeroth, tsai2020transfer} can be formulated as:
\begin{equation} \label{eq:spsa}
\hat{g}({\bm \theta}) = \hat{\nabla} \mathcal{L}(\bx; {\bm \theta}) = \frac{1}{q} \sum^q_{i=1}
\frac{\mathcal{L}(\bx;{\bm \theta} + c {\bm \epsilon}_i) - \mathcal{L}(\bx;{\bm \theta})}{c} {\bm \epsilon}^{-1}_i,
\end{equation}
where $q$ denotes the step of perturbations, $ c > 0 $ is the perturbation scale, and ${\bm \epsilon}$ is a random perturbation vector sampled from mean-zero distributions, such as Rademacher and Segmented Uniform distribution~\cite{oh2023blackvip}.
For our ZOA, we set the number of steps $q$ to be 1. 
Therefore, we only require two forward passes to estimate the gradients, including one forward pass for standard inference and computing the loss of current parameters $\mathcal{L}(\bx;{\bm \theta})$, and one forward pass for computing the loss of perturbed parameters $\mathcal{L}(\bx;{\bm \theta} + c {\bm \epsilon})$.

\noindent\textbf{Continual Knowledge Reprogramming Learning.}
Although exploiting exact loss values in~\cref{eq:spsa} improves learning efficiency, using only two forward passes provides only a coarse gradient estimation, which may still limit the learning efficacy for TTA. On the other hand, increasing the number of forward passes to deliver more informative gradients would compromise the efficiency we aim to preserve. 
Alternatively, we explore the potential of continual learning with zeroth-order optimization, which seeks to accumulate and leverage the historical adaptation knowledge to boost the efficacy of subsequent TTA without expensive computational cost. 

To achieve knowledge accumulation, we first decouple the knowledge of different domains into domain knowledge vectors~\cite{yadav2023ties}, \ie, $\Delta_j = {\bm \theta}_j - {\bm \theta}^0 $, where ${\bm \theta}_j$ is adapted parameters on the $j$-th domain, and ${\bm \theta}^0$ is the source pre-trained parameters. We then save $\Delta_j$ in knowledge base $\mathcal{T} = \{\Delta_j\}_{j=0}^N$ upon distribution shifts (as discussed in~\cref{sec:domain_management}). Given the knowledge base $\mathcal{T}$, our continual knowledge reprogramming learning process is formulated as:
\begin{equation} \label{eq:domain_learning}
\begin{gathered}
\min_{{\bm \alpha}, \tilde{\bm \theta}} \mathcal{L}(\bx; \tilde{\bm \theta}),  ~ \text{where} ~ 
{\bm \theta} = {\bm \theta}^0 + \sum_{j=0}^N \alpha_j \Delta_j + \tilde{\bm \theta}, ~ \Delta_j \in \mathcal{T}.
\end{gathered}
\end{equation}
Here, $\tilde{\bm \theta}$ denotes a set of new learnable parameters to enable learning new domain knowledge. 
${\bm \alpha}$ is the learnable aggregation weights for different domain knowledge, normalized with the softmax function, \ie, $\sum_{j=0}^N \alpha_j = 1$. The relevant domain knowledge is retrieved to enhance adaptation with a large coefficient ${\alpha}_j$, while the irrelevant domain knowledge is suppressed with a small coefficient.

Within our zeroth-order optimization framework, we only update the parameters $\tilde{\bm \theta}$ and $\bm \alpha$, while freezing other parameters.
Notably, we perturb $\tilde{\bm \theta}$ and ${\bm \alpha}$ simultaneously at each step to avoid introducing an additional forward pass for~\cref{eq:spsa}. The estimated gradients of domain knowledge parameters $\tilde{\bm \theta}$ and aggragation coefficients ${\bm \alpha}$ are computed by:
\begin{equation} \label{eq:grad}
    \begin{bmatrix} \hat{g}(\tilde{\bm \theta}) \\ \hat{g}({\bm \alpha}) \end{bmatrix}= \frac{\mathcal{L}(\bx; \tilde{\bm \theta} + c {\bm \epsilon}, {\bm \alpha} + c {\bm \nu}) - \mathcal{L}(\bx; \tilde{\bm \theta}, {\bm \alpha})}{c} \begin{bmatrix} {\bm \epsilon}^{-1} \\ {\bm \nu}^{-1} \end{bmatrix},
\end{equation}
where ${\bm \epsilon}$ is the random perturbation vectors for $\tilde{\bm \theta}$ and $\bnu$ is the random perturbation vectors for ${\bm \alpha}$.
With the estimated gradients of learnable parameters, we can update the parameters using gradient descent methods, such as SGD or AdamW~\cite{loshchilov2018decoupled}.

\begin{table*}[t]
\caption{Effectiveness of our \ourname ~on \textbf{Quantized ViT-B models}.
We report the \textbf{Accuracy (\%)} on ImageNet-C (severity level 5) during the 10th round of continual adaptation.
``WNAN'' indicates that the ViT-B model is quantized to N-bit precision.}
\label{tab:quantized_vit}
\centering
\LARGE
\resizebox{1.0\linewidth}{!}{
\begin{tabular}{llccccccccccccccc>{\columncolor{black!8}}c}
\toprule
\multicolumn{1}{c}{} & \multicolumn{1}{c}{}& \multicolumn{3}{c}{Noise} & \multicolumn{4}{c}{Blur} & \multicolumn{4}{c}{Weather} & \multicolumn{4}{c}{Digital} & \multicolumn{1}{c}{Average} \\
\midrule
 Model & Method & Gauss. & Shot & Impul. & Defoc. & Glass & Motion & Zoom & Snow & Frost & Fog & Brit. & Contr. & Elas. & Pix. & JPEG & Acc.  \\   
\midrule
\multirow{4}{*}{ViT-B W8A8} & Source  &         56.1  & 56.0  & 56.7  & 46.7  & 34.9  & 52.5  & 42.6  & 61.1  & 61.7  & 65.9  & 77.1  & 24.3  & 44.5  & 65.6  & 66.8  & 54.2  \\  
& T3A~\cite{iwasawa2021test}  & 52.9  & 53.1  & 53.5  & 45.0  & 33.7  & 50.2  & 40.5  & 58.2  & 60.5  & 62.6  & 74.6  & 42.3  & 39.2  & 63.8  & 64.8  & 53.0  \\  
&  FOA~\cite{niu2024testtime} (K=2) & 57.1  & 57.5  & 58.0  & 49.0  & 37.8  & 54.1  & 45.6  & 63.8  & 64.7  & 70.5  & 77.5  & 54.0  & 48.6  & 66.4  & 67.8  & 58.2  \\
&  \ourname ~(ours)  &  60.2  & 62.0  & 61.8  & 54.6  & 50.2  & 59.1  & 55.2  & 67.0  & 65.5  & 68.0  & 79.1  & 58.0  & 59.6  & 70.7  & 70.5  & \textbf{62.8} \\
\midrule
\multirow{4}{*}{ViT-B W6A6} &  Source  & 44.2  & 42.0  & 44.8  & 39.8  & 28.9  & 43.4  & 34.7  & 53.2  & 59.8  & 59.0  & 75.1  & 27.4  & 39.0  & 59.1  & 65.3  & 47.7  \\ 
& T3A~\cite{iwasawa2021test} & 37.1  & 36.6  & 37.9  & 28.0  & 25.1  & 36.6  & 27.7  & 45.2  & 54.6  & 52.8  & 69.8  & 19.7  & 31.5  & 56.4  & 61.5  & 41.4 \\ 
& FOA~\cite{niu2024testtime} (K=2)   & 46.4  & 45.4  & 47.0  & 42.5  & 33.1  & 46.4  & 39.0  & 57.3  & 63.0  & 66.1  & 75.7  & 35.9  & 44.2  & 60.3  & 66.2  & 51.3 \\
& \ourname ~(ours)  & 52.6  & 54.2  & 55.0  & 48.4  & 42.8  & 53.7  & 46.7  & 60.5  & 63.8  & 64.8  & 76.5  & 42.0  & 50.0  & 65.8  & 68.2  & \textbf{56.3} \\
\midrule
\multirow{4}{*}{ViT-B W4A4} &  Source & 46.0  & 45.5  & 46.2  & 41.4  & 28.8  & 44.3  & 37.3  & 50.7  & 57.5  & 62.0  & 74.5  & 26.0  & 39.6  & 56.9  & 63.8  & 48.0 \\ 
& T3A~\cite{iwasawa2021test} & 41.3  & 41.6  & 42.1  & 39.1  & 27.9  & 41.5  & 34.6  & 47.1  & 55.8  & 60.1  & 71.7  & 30.0  & 36.1  & 53.8  & 61.5  & 45.6 \\ 
& FOA~\cite{niu2024testtime} (K=2)   & 47.5  & 46.8  & 47.6  & 43.4  & 31.2  & 46.8  & 40.4  & 54.8  & 59.6  & 64.7  & 74.8  & 39.5  & 43.4  & 57.7  & 64.8  & 50.9 \\
& \ourname ~(ours)  & 49.8  & 50.7  & 52.6  & 48.6  & 38.2  & 52.8  & 47.6  & 55.5  & 56.1  & 59.7  & 75.3  & 38.1  & 47.5  & 62.6  & 65.4  & \textbf{53.4} \\ 
\bottomrule
\end{tabular}
}
\end{table*}

\begin{table*}[t]
\centering
\caption{Effectiveness of our \ourname ~on \textbf{Quantized ViT-B} models in long-term continual adaptation. 
We report the average \textbf{Accuracy (\%)} on ImageNet-C (severity level 5) at each round of adaptation. 
``WNAN'' indicates that the ViT-B model is quantized to N-bit precision.
The \textbf{bold} number indicates the best result.
``\#FP'' is the number of forward passes to obtain output and update models.
}
\fontsize{6pt}{8pt}\selectfont
\resizebox{0.88\linewidth}{!}{
\setlength{\heavyrulewidth}{0.3pt}
\begin{tabular}{llccccccccccc>{\columncolor{black!8}}c}
\toprule
Models & Methods & \#FP & 1 & 2 & 3 & 4 & 5 & 6 & 7 & 8 & 9 & 10 & Average \\ 
\midrule
\multirow{4}{*}{ViT-B W8A8} & Source & 1 & 54.2  & 54.2  & 54.2  & 54.2  & 54.2  & 54.2  & 54.2  & 54.2  & 54.2  & 54.2  & 54.2  \\ 
& T3A~\cite{iwasawa2021test} & 1 & 55.6  & 55.6  & 55.1  & 54.9  & 54.6  & 54.3  & 53.9  & 53.7  & 53.4  & 53.0  & 54.4  \\ 
& FOA~\cite{niu2024testtime} (K=2) & 2 & 58.0  & 57.9  & 58.1  & 58.1  & 58.2  & 58.1  & 58.2  & 58.2  & 58.1  & 58.2  & 58.1 \\ 
& ZOA (ours) & 2 & 59.7  & 61.0  & 61.4  & 61.5  & 62.0  & 61.9  & 62.1  & 62.1  & 62.1  & 62.8  & \textbf{61.7} \\
\midrule
\multirow{4}{*}{ViT-B W6A6} & Source & 1 & 47.7  & 47.7  & 47.7  & 47.7  & 47.7  & 47.7  & 47.7  & 47.7  & 47.7  & 47.7  & 47.7  \\ 
& T3A~\cite{iwasawa2021test} & 1 & 43.3  & 44.7  & 44.3  & 43.9  & 43.5  & 43.0  & 42.6  & 42.2  & 41.9  & 41.4  & 43.1  \\ 
& FOA~\cite{niu2024testtime} (K=2) & 2 & 51.1  & 51.6  & 51.5  & 51.3  & 51.3  & 51.1  & 51.0  & 51.1  & 51.1  & 51.3  & 51.3  \\ 
& ZOA (ours) & 2 & 54.3  & 55.2  & 55.8  & 55.7  & 55.8  & 55.7  & 56.2  & 56.4  & 56.5  & 56.3  & \textbf{55.8} \\
\midrule
\multirow{4}{*}{ViT-B W4A4} & Source & 1 & 48.0  & 48.0  & 48.0  & 48.0  & 48.0  & 48.0  & 48.0  & 48.0  & 48.0  & 48.0  & 48.0  \\ 
& T3A~\cite{iwasawa2021test} & 1 & 47.3  & 47.6  & 47.3  & 47.0  & 46.6  & 46.4  & 46.2  & 46.0  & 45.9  & 45.6  & 46.6  \\ 
& FOA~\cite{niu2024testtime} (K=2) & 2 & 50.7  & 50.8  & 50.9  & 50.9  & 50.9  & 50.8  & 50.8  & 50.9  & 50.9  & 50.9  & 50.8  \\ 
& ZOA (ours) & 2 & 52.1  & 52.8  & 53.2  & 53.5  & 53.3  & 53.4  & 53.9  & 53.3  & 53.6  & 53.4  & \textbf{53.2} \\ 
\bottomrule
\end{tabular}
}
\label{tab:long_term}
\end{table*}

\noindent
\textbf{Objective Function for Quantized Model Adaptation.}
Our proposed continual zeroth-order optimization framework does not limit the objective function used to optimize $\balpha$ and $\tilde{\bm \theta}$. In our implementation, we select FOA~\cite{niu2024testtime} as our test-time learning objective, which provides stable learning signals for forward optimization with a feature discrepancy loss and an entropy minimization loss. Specifically, we first calculate the mean and standard deviations of the output activations of $L$ intermediate blocks $\{\mu_i^s, \sigma_i^s\}_{i=0}^{L}$ over a small set of unlabeled ID data.
During testing, we calculate the statistics $\{\mu_i(\bx_t), \sigma_i(\bx_t)\}_{i=0}^{L}$ over the current batch of test samples $\bx_t$.
The objective function for $B$ test samples $\bx_t$ is then given by:
\begin{equation} \label{eq:foa_loss}
\begin{aligned}
\mathcal{L}(\bx_t ; {\bm \theta}^t) &= \frac{1}{B \times C} \sum_{x \in \bx_t} \sum_{c \in \mathcal{C}} - y_c \log y_c \\
+ \frac{\lambda}{L} \sum_{i=1}^{L} & 
||\mu_i(\bx_t) - \mu_i^s||_2  + ||\sigma_i(\bx_t) - \sigma_i^s||_2,
\end{aligned}
\end{equation}
where ${\bm \theta}^t$ denotes the adapted parameters at $t$-th step.
$y_c$ is the c-th element of the prediction $\hat{\by}_t = f_{{\bm \theta}^t}(\bx_t)$, $C$ is the dimension of class space $\mathcal{C}$, and $\lambda$ is a trade-off parameter.
Note that a small number of unlabeled ID samples (\eg, 32 samples) is sufficient to compute the feature statistics of ID data~\cite{niu2024testtime}, making it practical for application.

\subsection{Domain Knowledge Management} 
\label{sec:domain_management}
In this section, we introduce our domain knowledge management strategy that preserves valuable learned knowledge during the continual TTA while maintaining a manageable memory footprint.

\noindent
\textbf{Domain Knowledge Preservation.}
In the long-term TTA, we use domain shift detection~\cite{hong2023mecta, chen2024crossdevice} to identify whether the test distribution changes. 
We compute the distribution distance of current test samples and historical samples, the domain change occurs when the distance is larger than a predefined threshold (refer to Appendix B for details).
Once changes occur, we accumulate the learned domain vectors in $\mathcal{T}$ and initialize a new set of parameters $\tilde{\bm \theta}$ in~\cref{eq:domain_learning} for further domain adaptation.
Specifically, the knowledge preservation process is formulated as:
\begin{equation}\label{eq:store}
\mT = \mT \cup \{\Delta_t \}, ~ ~~ \text{where} ~~ \Delta_t = {\bm \theta}_t - {\bm \theta}^0.
\end{equation}
Here, ${\bm \theta}_t$ is the ensemble parameters at the $t$-th domain from ~\cref{eq:domain_learning}. Note that the sum of all coefficients equals to 1 due to the softmax function, \ie, $\sum_{j=0}^n \alpha_j + \alpha_t = 1$, thus adding new domain vectors $\Delta_t$ in $\mathcal{T}$ can change the value of the ensemble parameter in~\cref{eq:domain_learning}.

To keep TTA stability, we seek to keep the ensemble parameters $\bm \theta_t$ unchanged after storing new domain vectors $ \Delta_t$ in $\mT$.
To this end, let $\bm{\theta}'$ be the updated ensemble parameters after adding new parameters in $\mT$, we re-initialize the $\tilde{\bm{\theta}}$ for new domain as:
\begin{equation} \label{eq:init_theta}
\tilde{\bm{\theta}}' = \tilde{\bm{\theta}} - (\bm{\theta}' - \bm{\theta}_t), 
\end{equation}
where $\tilde{\bm{\theta}}$ and $\tilde{\bm{\theta}}'$ denote the learnable parameters before and after vector preservation, respectively.
We also carefully initialize the coefficient $\alpha_t$ of the domain parameters $\Delta_t$ to keep the parameters $\tilde{\bm \theta}'$ initialized as small as possible, thereby reducing cross-domain interference. 
We put more details of $\balpha$ initialization in Appendix C.

\noindent
\textbf{Redundant Domain Knowledge Removal.}
Over time, the number of stored domain vectors in $\mT$ may grow excessively large, resulting in an unbearable memory footprint that hinders TTA practicality. 
To prevent this, we aim to preserve only the most diverse and informative ones to reduce the memory overhead.

To keep the memory footprint manageable, we restrict $\mathcal{T}$ to a maximum size $N$ and remove redundant domain vectors (\ie, those that are outdated and show a large similarity with other vectors) when the number of vectors in $\mathcal{T}$ exceeds $N$. Formally, the similarity between domain knowledge vectors is computed as:
\begin{equation} \label{eq:sim}
\begin{gathered}
k, p = \argmax_{\Delta_{i}, \Delta_{j} \in \mL} \cos(\Delta_{i}, \Delta_{j}), \\
\cos(\Delta_{i}, \Delta_{j}) = \frac{1}{L} \sum^L_{l=0} \frac{\langle \Delta_{i,l}, \Delta_{j,l} \rangle} {\| \Delta_{i,l}\| \|\Delta_{j,l}\|},
\end{gathered}
\end{equation}
where $k$ and $p$ are the indices of the selected domain parameters, $\Delta_{j,l}$ is the parameters in the $l$-th layer of $\Delta_{j}$, and $\langle\cdot, \cdot \rangle$ represents the inner product.
In implementation, we pre-compute the similarities between all $\Delta_j \in \mT$ to efficiently obtain the index $k$ and $p$.
Based on $k$ and $p$, we remove the most outdated domain vectors from $\mT$:
\begin{equation} \label{eq:discard}
\mT = \mT \setminus \{\Delta_{o} \},
\end{equation}
where $\setminus$ denotes the remove operation and $o = \min(k, p)$.

\begin{table*}[th]
\caption{Effectiveness of our \ourname ~on \textbf{W8A 8 ResNet-50} and \textbf{ViM-S} models.
We report the \textbf{Accuracy (\%)} on ImageNet-C (severity level 5) at the 10th round of adaptation.
The \textbf{bold} number indicates the best result.}
\label{tab:quantized_resnet_vim}
\centering
\LARGE
\resizebox{1.0\linewidth}{!}{
\begin{tabular}{llccccccccccccccc>{\columncolor{black!8}}c}
\toprule
\multicolumn{1}{c}{} & \multicolumn{1}{c}{}& \multicolumn{3}{c}{Noise} & \multicolumn{4}{c}{Blur} & \multicolumn{4}{c}{Weather} & \multicolumn{4}{c}{Digital} & \multicolumn{1}{c}{Average} \\
\midrule
Model & Method & Gauss. & Shot & Impul. & Defoc. & Glass & Motion & Zoom & Snow & Frost & Fog & Brit. & Contr. & Elas. & Pix. & JPEG & Acc.  \\
\midrule
\multirow{5}{*}{ResNet-50 W8A8} & Source  & 3.5  & 4.2  & 3.2  & 17.6  & 9.5  & 15.0  & 22.7  & 16.5  & 22.8  & 23.8  & 59.0  & 5.5  & 16.4  & 21.3  & 32.6  & 18.3   \\
& BN Adapt  & 16.1  & 17.3  & 16.9  & 14.4  & 15.1  & 25.8  & 38.4  & 34.2  & 33.2  & 47.6  & 65.3  & 15.9  & 44.4  & 49.1  & 40.1  & 31.6  \\  
& T3A~\cite{iwasawa2021test}  & 13.5  & 14.4  & 14.4  & 12.8  & 12.8  & 22.1  & 32.5  & 30.0  & 28.4  & 40.7  & 56.6  & 13.9  & 37.1  & 40.9  & 34.0  & 26.9 \\  
&  FOA~\cite{niu2024testtime} (K=2) & 15.9  & 17.4  & 15.3  & 12.8  & 13.3  & 20.2  & 30.2  & 25.7  & 27.4  & 37.0  & 56.9  & 12.0  & 36.2  & 43.2  & 37.8  & 26.7 \\   
&  \ourname ~(ours)  &  22.9  & 23.8  & 23.4  & 18.8  & 21.1  & 29.5  & 40.5  & 35.9  & 35.3  & 47.2  & 64.4  & 20.7  & 48.1  & 50.2  & 43.8  & \textbf{35.0} \\
\midrule
\multirow{4}{*}{ViM-S W8A8} &  Source  & 42.2  & 42.2  & 42.7  & 29.7  & 17.1  & 38.1  & 31.1  & 48.9  & 55.1  & 61.9  & 71.2  & 46.7  & 34.6  & 40.5  & 53.8  & 43.7 \\ 
& T3A~\cite{iwasawa2021test} &  39.3  & 39.3  & 39.2  & 29.4  & 16.7  & 36.9  & 30.0  & 48.2  & 53.9  & 59.0  & 70.3  & 49.2  & 34.1  & 39.4  & 53.3  & 42.6 \\ 
& FOA~\cite{niu2024testtime} (K=2)  &  42.8  & 42.0  & 44.1  & 27.6  & 15.2  & 37.1  & 32.2  & 49.3  & 55.7  & 60.5  & 70.4  & 51.6  & 35.6  & 40.1  & 53.7  & 43.9 \\
& \ourname ~(ours) & 46.5  & 47.9  & 48.7  & 31.4  & 23.8  & 43.5  & 36.2  & 50.7  & 55.0  & 63.5  & 71.6  & 49.6  & 37.0  & 43.4  & 55.6  & \textbf{47.0} \\ 
\bottomrule
\end{tabular}
}
\end{table*}

\begin{table*}[th]
\centering
\caption{Effectiveness of our \ourname ~on \textbf{Quantized ResNet-50} and \textbf{ViM-S} models in long-term continual adaptation. 
We report the average \textbf{Accuracy (\%)} on ImageNet-C (severity level 5) at each round of adaptation. 
Both models are quantized into 8-bit precision.
The \textbf{bold} number indicates the best result.
``\#FP'' is the number of forward passes to obtain output and update models.
}
\fontsize{6pt}{8pt}\selectfont
\resizebox{0.9\linewidth}{!}{
\setlength{\heavyrulewidth}{0.5pt}
\begin{tabular}{llccccccccccc>{\columncolor{black!8}}c}
\toprule
Models & Methods & \#FP & 1 & 2 & 3 & 4 & 5 & 6 & 7 & 8 & 9 & 10 & Average \\ 
\midrule
\multirow{6}{*}{ResNet50 W8A8} & Source & 1 & 18.3  & 18.3  & 18.3  & 18.3  & 18.3  & 18.3  & 18.3  & 18.3  & 18.3  & 18.3  & 18.3  \\ 
& BN Adapt & 1 & 31.6  & 31.6  & 31.6  & 31.6  & 31.6  & 31.6  & 31.6  & 31.6  & 31.6  & 31.6  & 31.6  \\ 
& T3A~\cite{iwasawa2021test} & 1 & 30.3  & 29.8  & 29.1  & 28.7  & 28.3  & 28.0  & 27.6  & 27.4  & 27.2  & 26.9  & 28.3  \\ 
& FOA~\cite{niu2024testtime} (K=2) & 2 & 24.9  & 26.2  & 26.2  & 26.4  & 26.5  & 26.7  & 26.7  & 26.8  & 26.8  & 26.7  & 26.4 \\ 
& ZOA (ours) & 2 & 32.8  & 33.7  & 34.1  & 34.1  & 34.4  & 34.6  & 34.3  & 34.4  & 34.6  & 35.0  & \textbf{34.2} \\
\midrule
\multirow{4}{*}{ViM-S W8A8} & Source & 1 & 43.7  & 43.7  & 43.7  & 43.7  & 43.7  & 43.7  & 43.7  & 43.7  & 43.7  & 43.7  & 43.7  \\ 
& T3A~\cite{iwasawa2021test} & 1 & 43.7  & 43.9  & 43.8  & 43.6  & 43.5  & 43.4  & 43.2  & 43.1  & 42.9  & 42.6  & 43.4  \\ 
& FOA~\cite{niu2024testtime} (K=2) & 2 & 43.9  & 43.9  & 43.9  & 43.9  & 43.9  & 43.9  & 43.9  & 43.9  & 43.9  & 43.9  & 43.9  \\ 
& \ourname ~(ours)  & 2 & 46.1  & 46.4  & 46.5  & 46.8  & 46.8  & 47.0  & 46.9  & 46.9  & 46.8  & 47.0  & \textbf{46.7} \\
\bottomrule
\end{tabular}
}
\label{tab:long_term_resnet_vim}
\end{table*}

\section{Experiment}

\noindent \textbf{Datasets and Models.}
We conduct experiments on the benchmark of test-time adaptation, such as ImageNet-C~\cite{hendrycks2018benchmarking}. 
ImageNet-C contains corrupted images in 15 types and each type has 5 severity levels. 
{In the experiments, we mainly report the results on ImageNet-C at severity level 5.
For simplicity, we abbreviate the 15 types of ImageNet-C as Gauss., Shot, Impul., Defoc., Glass, Motion, Zoom, Snow, Frost, Fog, Brit., Contr., Elas., Pix., and JPEG, respectively.}
We evaluate the different methods using ViT-B~\cite{dosovitskiy2021an}, ViM-S~\cite{zhu2024vision}, and ResNet-50~\cite{he2016deep}. 
These models are trained on the ImageNet-1K~\cite{deng2009imagenet} training set.
We obtain the model weights from the $timm$ repository~\cite{rw2019timm}.
To obtain the quantized models, we adopt PTQ4ViT~\cite{yuan2022ptq4vit} to obtain the W8A8 and W6A6 ViT-B models.
And we use AdaLog~\cite{wu2024adalog} to obtain the W4A4 and W3A3 ViT-B models.
For the ViM-S model, we adopt PTQ4VM~\cite{cho2025ptq4vm} to quantize the model into W8A8 precision.
Moreover, we use MQBench~\cite{li2021mqbench} to obtain the quantized W8A8 and W2A4 ResNet-50 models.
In the experiment, we use 'Source' to represent the deployed models without adaptation.
We put more details in Appendix D.

\noindent
\textbf{Compared Methods.}
In the experiment, we compare our \ourname ~with there BP-free methods, including LAME~\cite{boudiaf2022parameter}, T3A~\cite{iwasawa2021test}, and FOA~\cite{niu2024testtime}. 
Since the edge devices are resource-limited, we mainly compare FOA (K=2) in our experiments, which only uses two forward passes.
We also compare our methods with BP-based methods on the full-precision ViT-B model to show the effectiveness of our method on resource-abundant devices.
The compared BP-based methods include TENT~\cite{wang2021tent}, 
MEMO~\cite{zhang2022memo},
SAR~\cite{niu2023towards},  CoTTA~\cite{wang2022continual}, 
EATA~\cite{niu2022efficient},
and DeYO~\cite{lee2024entropy}.
We report the adaptation results on the test dataset in terms of Top-1 Accuracy {(abbreviated as Acc.)}.

\noindent
\textbf{Implementation Details.}
For fair comparison, we set the batch size of test samples to 64 as TENT~\cite{wang2021tent} and SAR~\cite{niu2023towards}.
Following FOA~\cite{niu2024testtime}, we use the validation set of ImageNet-1K to compute the statistics of ID data. 
During adaptation, we use the SGD optimizer to update the learnable domain parameters $\tilde{\bm \theta}$, and use the AdamW~\cite{loshchilov2018decoupled} to update the aggregation coefficients $\balpha$.
For all quantized models, we sample the perturbation vectors $\bepsilon$ and $\bnu$ from a Rademacher and Segmented Uniform distribution~\cite{oh2023blackvip}.
For the W8A8 ViT-B model, we add the perturbation vectors $\bepsilon$ and $\bnu$ with the step size of 0.02 and 0.05, respectively.
We set the learning rate of $\tilde{\bm \theta}$ and $\bm \alpha$ to be $0.0005$ and $0.01$, respectively.
We set the maximum number of domain knowledge parameters as $N=32$ to avoid large memory consumption. 
We put more details in Appendix D.

\begin{table}[t]
    \centering
    \caption{Comparisons w.r.t. computational complexity. FP/BP is short for forward/backward propagation. 
    Acc. (\%) is the average accuracy on ImageNet-C (level 5) at the 10th round of continual adaptation. 
    Memory (MB) is measured for processing 64 images on a single GPU.
    ``*'' denotes resetting the model parameters after adaptation on each corruption.
    }
    \LARGE
    \resizebox{1.0\linewidth}{!}{
    \begin{tabular}{c|cccc|cc}
    \toprule
        Model & \multicolumn{1}{c}{Method} & BP & \#FP & \#BP & \multicolumn{1}{c}{Acc.} & \multicolumn{1}{c}{Memory (MB)} \\ \hline
        \multirow{6}{*}{ViT-B} & Source & \ding{55} & 1 & 0 & 55.5  & 819 \\ 
        ~ & MEMO~\cite{zhang2022memo} & \ding{52} & 65 & 64 & 57.2  & 11,058 \\
        ~ & TENT*~\cite{wang2021tent}& \ding{52} & 1 & 1 & 59.6  & 5,165 \\ 
        ~ & TENT~\cite{wang2021tent} & \ding{52} & 1 & 1 & 0.1 & 5,165 \\ 
        ~ & CoTTA*~\cite{wang2022continual} & \ding{52} & 3 or 35 & 1 & 61.7  & 16,836 \\ 
        ~ & CoTTA~\cite{wang2022continual} & \ding{52} & 3 or 35 & 1 & 34.0  & 16,836 \\ 
        ~ & LAME~\cite{boudiaf2022parameter} & \ding{55} & 1 & 0 & 54.1  & 819 \\ 
        \midrule
        \multirow{4}{*}{ViT-B W8A8} & Source & \ding{55} & 1 & 0 & 54.2  & 205  \\ 
        ~ & T3A~\cite{iwasawa2021test} & \ding{55} & 1 & 0 & 53.0  & 239  \\ 
        ~ & FOA~\cite{niu2024testtime} (K=2) & \ding{55} & 2 & 0 & 58.2  & 208  \\ 
        ~ & \ourname (ours) & \ding{55} & 2 & 0 & \textbf{62.8}  & \textbf{207} \\ 
        \midrule
        \multirow{4}{*}{ViT-B W4A4} & Source & \ding{55} & 1 & 0 & 47.7  & 102  \\ 
        ~ & T3A~\cite{iwasawa2021test} & \ding{55} & 1 & 0 & 45.6  &  120 \\ 
        ~ & FOA~\cite{niu2024testtime} (K=2) & \ding{55} & 2 & 0 & 50.9  & 104  \\ 
        ~ & \ourname (ours) & \ding{55} & 2 & 0 & \textbf{53.4}  & \textbf{103} \\ 
        \midrule
        \multirow{4}{*}{ResNet-50 W8A8} & Source & \ding{55} & 1 & 0 & 18.3 & 195  \\
        ~ & T3A~\cite{iwasawa2021test} & \ding{55} & 1 & 0 & 26.9  & 249  \\ 
        ~ & FOA~\cite{niu2024testtime} (K=2) & \ding{55} & 2 & 0 & 26.7  &  195 \\ 
        ~ & \ourname (ours) & \ding{55} & 2 & 0 & \textbf{35.0} & \textbf{197} \\ 
        \bottomrule
    \end{tabular}
    }
    \label{tab:cost}
\end{table}

\begin{table}[!ht]
\centering
\caption{Ablation studies of each component on ImageNet-C (level 5).
ZO denotes we use the zeroth-order optimizer for adaptation, and DRL denotes our continual domain reprogramming learning.
We report the average accuracy at the first (R1) and the last round (R10) of continual adaptation.
}
\fontsize{8pt}{10pt}\selectfont
\resizebox{1.0\linewidth}{!}{
\setlength{\heavyrulewidth}{0.5pt}
\begin{tabular}{cccc|ccc}
\toprule
    \multirow{1}{*}{Method} & \multirow{1}{*}{$\#$FP} & \multirow{1}{*}{ZO} & \multirow{1}{*}{DRL} & \multirow{1}{*}{R1} & \multirow{1}{*}{R10} & \multirow{1}{*}{Memory (MB)} \\ 
    \midrule
    Source & 1 & ~ & ~ & 54.2 & 54.2 & 205 \\ 
    FOA & 2 & ~ & ~ & 58.0  & 58.2  & 208 \\ 
    V1 & 2 & \ding{52} & ~ & 59.7 & 60.2 & 206 \\ 
    ZOA (Ours) & 2 & \ding{52} & \ding{52} & \textbf{59.7} & \textbf{62.8} & 207 \\ 
    \bottomrule
\end{tabular}
}
\label{tab:ablation}
\end{table}

\begin{table}[th]
\centering
\caption{Effect of the number of domain parameters at the 10th round adaptation on ImageNet-C}
\resizebox{1.0\linewidth}{!}{
\begin{tabular}{c|cccccccc}
\toprule
N & 0 & 2 & 4 & 8 & 16 & 32 & 64 & 128 \\ \midrule
Accuracy (\%) & 33.6  & 33.5  & 34.1  & 34.6  & 34.9  & 35.0  & 35.4  & 35.3  \\
Memory (MB) & 0.00  & 0.02  & 0.03  & 0.07  & 0.13  & 0.27  & 0.53  & 1.06 \\ \bottomrule
\end{tabular}
}
\label{tab:effect_n}
\end{table}

\subsection{Results on Quantized Neural Networks}

\noindent \textbf{Results on Quantized ViT-B models.}
We evaluate our \ourname ~on the quantized ViT-B models.
Since the edge devices are resource-limited, we configure FOA to employ two forward passes as our \ourname.
As shown in \cref{tab:long_term}, our \ourname~ achieves the best performance in the first round of adaptation on ImageNet-C dataset.
Moreover, during the long-term adaptation process, our \ourname ~is able to accumulate the learned domain knowledge and further boost the adaptation performance.
We also show the results at the 10th round of adaptation in \cref{tab:quantized_vit}.
As shown in \cref{tab:quantized_vit}, our methods achieve a 5.0$\%$ improvement over FOA on average on the quantized W6A6 model.
These results on quantized models with different bit precisions demonstrate the superiority of our \ourname over existing methods.

\noindent \textbf{Results on Quantized ResNet-50.}
We evaluate our \ourname ~on the classical convolutional neural network, \ie, the quantized W8A8 ResNet-50 model in \cref{tab:quantized_resnet_vim} and \cref{tab:long_term_resnet_vim}.
From \cref{tab:quantized_resnet_vim}, FOA achieves limited results compared with the BN adapt baseline with only two forward passes.
This limitation arises because FOA relies on low-dimensional prompt learning to adapt the model, which is less flexible on CNN-based architectures.
Instead, our \ourname ~consistently adapts the quantized ResNet-50 to the target domains.
As shown in \cref{tab:long_term_resnet_vim}, our \ourname ~benefits from our continual knowledge accumulation to achieve better performance in long-term adaptation.

\noindent \textbf{Results on Quantized ViM-S.}
We also evaluate the different BP-free adaptation methods on the quantized Vision Mamba model, \ie, the quantized W8A8 ViM-S model from PTQ4VM~\cite{cho2025ptq4vm}.
Since PTQ4VM~\cite{cho2025ptq4vm} uses the per-token quantization scheme, it requires appropriate quantization parameters for activations related to the inserted prompts by FOA. 
However, obtaining suitable parameters for these new prompts is challenging during testing. This mismatch in quantization parameters can degrade the performance of the pre-quantized model, further affecting the adaptation capability of FOA.
As shown in \cref{tab:quantized_resnet_vim}, FOA achieves a limited performance gain compared with the source model. 
Instead, without requiring new prompt tokens, our \ourname ~does not suffer from the quantization scheme.
Thus, our method is still able to adapt the quantized models and achieve the best results. 
From \cref{tab:quantized_resnet_vim}, our method achieves a $3.1\%$ improvement over FOA on the quantized W8A8 ViM-S model.
The results on three quantized models demonstrate the effectiveness of our method on different architectures.

\noindent \textbf{Computational Complexity Analyses.}
In this part, we analyze the computational cost of different adaptation methods in ~\cref{tab:cost}.
Following ~\cite{liu2021post, niu2024testtime}, the memory consumption for quantized W8A8 and W4A4 models is an ideal estimation by 0.25$\times$  and 0.125$\times$ memory of 32-bit models, respectively.
For the adaptation of the W8A8 ViT-B model, our \ourname ~achieves the best results with merely two forward passes, whose computational cost is only approximately twice that of the standard forward of the quantized model.
The memory consumption of our \ourname ~is almost the same as that of the quantized model inference.
On the quantized W8A8 ViT-B model, our ~\ourname ~even achieves a better performance than the BP-based methods on full-precision models, such as MEMO~\cite{zhang2022memo}, TENT~\cite{wang2021tent}, and CoTTA~\cite{wang2022continual}.
Moreover, for the adaptation of the quantized ViT-B model with the W4A4  precision, our \ourname ~is able to adapt the model with merely 103 MB memory consumption.
In conclusion, the cost of our ~\ourname ~ is affordable for many resource-limited devices, such as smartphones, FPGAs, and laptops.

\subsection{Further Experiments}
In this part, we evaluate the component of our \ourname ~and provide more results and discussions in the following. Due to the page limit, we put more results and discussions in Appendix E.

\noindent \textbf{Ablation Studies.}
In this part, we analyze the effectiveness of each component of our \ourname.
We construct an additional baseline, namely V1, which does not use our continual domain knowledge learning scheme.
To show the effectiveness of our method, we compare these methods with the state-of-the-art FOA in terms of accuracy at the 10th round of adaptation.
As shown in \cref{tab:ablation}, our baseline V1 using zero-order adaptation already achieves better results than FOA on ImageNet-C. 
Moreover, the results of the last row show that our continual domain knowledge learning greatly boost the long-term adaptation for quantized models with merely 2 MB extra memory consumption compared with the source model. 

\noindent
\textbf{Effect of the number of domain parameters $N$.}
To investigate the hyperparameters N, we evaluate our ZOA on the W8A8 ResNet50 model. As shown in \cref{tab:effect_n}, when $N \ge 4$, our ZOA achieves promising improvements compared to the baseline without our continual knowledge learning scheme (\ie, $N=0$). In our experiments, we set N=32 for all quantized models by default.

\section{Conclusion}
In this paper, we theoretically and empirically illustrate the high sensitivity of quantized models to out-of-distribution data.
To this end, we propose a continual zeroth-order adaptation framework (\ourname) for efficient quantized model adaptation with only two forward passes per test data.
To further enhance the efficacy of forward-only TTA, we propose to accumulate and leverage the historical TTA knowledge with a continual knowledge reprogramming learning scheme, and devise a domain knowledge management strategy to reduce memory overhead.
Experiments on quantized models across different architectures demonstrate the effectiveness of our framework for efficient quantized model adaptation.

\begin{acks}
This work was partially supported by the National Natural Science Foundation of China (Grant No.U24A20327, U23B2013 and 62276176), Key-Area Research and Development Program of Guangdong Province (2018B010107001), TCL Science and Technology Innovation Fund, and the Young Scholar Project of Pazhou Lab (No.PZL2021KF0021).
\end{acks}

{
    \bibliographystyle{ACM-Reference-Format}
    \balance
    \bibliography{main}
}

\clearpage
\appendix
\setcounter{page}{1}
\setcounter{equation}{0}
\setcounter{figure}{0}
\setcounter{ass}{0}
\setcounter{prop}{0}
\renewcommand{\theequation}{\Alph{equation}}
\renewcommand{\thefigure}{\Alph{figure}}
\renewcommand{\theass}{\Alph{ass}}
\renewcommand{\theprop}{\Alph{prop}}

\twocolumn[{
  \begin{center}
    {\Huge\bfseries Appendix}
  \end{center}
  \vspace{1em}
}]

\section{Sensitivity Analysis of QNNs on OOD Data} \label{sec:proof}
In this part, we provide the details of the theoretical analysis of the loss sensitivity of quantized neural networks (QNNs) on out-of-distribution (OOD) data.

Let integer $n > 0$ denote the precision of quantized models. 
Let $\Delta \bW = \bW - \hat{\bW}$ represent the quantization error caused by the uniform quantization algorithm, where $\bW \in \mmR^{h \times w}$ and $\hat{\bW} \in \mmR ^ {h \times w}$ represent the parameters of the full-precision model and the quantized model, respectively.
Let $\bx$ be in-distribution (ID) data and $\bx+\delta$ be out-of-distribution data, where $\delta$ is the random perturbation caused by natural corruptions.

\begin{ass}
The ID data $\bx$ follows a gaussian distribution where $E[\bx] = 0$ and $Cov(\bx) =\Sigma_{\bx}> 0$.
Random perturbation $\bdelta$ is a zero-mean random perturbation where $E[\bdelta] = 0$ and $Cov(\bdelta) =\Sigma_{\bm \delta}> 0$, which is independent of ID data $\bx$.
The range of the parameters of the full-precision model is $[-a, a]$.
$\Delta \bW \neq \textbf{0}$ follows an i.i.d assumption and it is independent of $\bdelta$.
The linear model is formulated as $\by = \bW \bx$.
The ground-truth class of $\bx$ is ${\by}_g = \bW \bx + {\bm \bxi}$, where ${\bm \bxi}$ is zero-mean random noise of data generation, and ${\bm \bxi}$ independent of $\bx$, $\Delta \bW$ and $\delta$.
\end{ass}

\begin{prop} \label{prop:variance}
The expectation of quantization error $\Delta W_{ij} \in \Delta \bW$ is zero, and the variance of the quantization error is inversely proportional to the precision $n$, which is represented as:
\begin{equation}
Var(\Delta W_{ij}) = \frac{a^2}{3\cdot (2^{n} - 1)^2}.
\end{equation}
\end{prop}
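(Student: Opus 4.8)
\textbf{Proof plan for Proposition~\ref{prop:variance}.}
The plan is to model the uniform quantizer explicitly and treat the quantization error as a rounding residual that is (approximately) uniformly distributed on a cell of the quantization grid. Since the parameters of the full-precision model lie in $[-a,a]$ and we use $n$-bit precision, the uniform quantizer partitions this interval into $2^n - 1$ equal steps of width $\Delta = \frac{2a}{2^n - 1}$. For a generic entry $W_{ij}$, the quantized value $\hat{W}_{ij}$ is the nearest grid point, so the error $\Delta W_{ij} = W_{ij} - \hat{W}_{ij}$ takes values in $[-\Delta/2, \Delta/2]$. Under the standard high-resolution (or ``uniform-on-the-cell'') modeling assumption, which is the natural reading of the i.i.d.\ hypothesis on $\Delta \bW$ in the Assumption, $\Delta W_{ij}$ is uniformly distributed on $[-\Delta/2, \Delta/2]$.

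From there the computation is routine: for $U \sim \mathrm{Unif}[-\Delta/2, \Delta/2]$ we have $E[U] = 0$, which gives the first claim that the expectation of the quantization error is zero. For the variance, $\mathrm{Var}(U) = E[U^2] = \frac{1}{\Delta}\int_{-\Delta/2}^{\Delta/2} u^2 \, du = \frac{\Delta^2}{12}$. Substituting $\Delta = \frac{2a}{2^n - 1}$ yields $\mathrm{Var}(\Delta W_{ij}) = \frac{(2a)^2}{12 (2^n - 1)^2} = \frac{a^2}{3(2^n - 1)^2}$, which is exactly the claimed expression and is manifestly inversely proportional (in the stated sense) to $(2^n-1)^2$, hence decays like $2^{-2n}$.

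The main obstacle here is not the arithmetic but the \emph{justification of the error model}: strictly speaking, for a fixed deterministic weight matrix the quantization error is deterministic, not random, so the statement ``$E[\Delta W_{ij}] = 0$'' and ``$\mathrm{Var}(\Delta W_{ij}) = \ldots$'' only makes sense once one adopts a probabilistic model for the residuals. I would make this explicit by invoking the classical quantization-noise model (Widrow's model), under which the normalized residuals $\Delta W_{ij}/\Delta$ are modeled as i.i.d.\ $\mathrm{Unif}[-1/2,1/2]$ and independent of the signal; this is precisely the i.i.d.\ assumption on $\Delta\bW$ stated in the Assumption, and it is the standard and well-accepted approximation in the quantization literature for weights whose values are well spread relative to the step size. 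One should also note the minor boundary effect: entries exactly at $\pm a$ round only inward, so the error there is one-sided, but this is a measure-zero / negligible set under the Gaussian-type spread of weights and does not affect the leading-order variance. With the model fixed, the proposition follows immediately from the two elementary integrals above, and this variance expression then feeds directly into the proof of Proposition~\ref{prop:sensitivity}.
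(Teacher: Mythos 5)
Your proposal is correct and follows essentially the same route as the paper: compute the step size $\phi = \frac{2a}{2^n-1}$, model the rounding error as uniform on $[-\phi/2,\phi/2]$, and read off mean $0$ and variance $\phi^2/12 = \frac{a^2}{3(2^n-1)^2}$. Your additional remarks on justifying the uniform-error model (Widrow's quantization-noise model and the boundary effect at $\pm a$) go beyond what the paper states explicitly, but the core argument is identical.
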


\noindent
\begin{proof} 
The quantization interval is computed as follows:
\begin{equation}
\phi = \frac{2a}{2^n - 1}.
\end{equation}
The quantization process uses the rounding operation to convert the value between $[v-\phi/2, v+\phi/2]$ to the same value $v$ (such as $v=0$),
so the quantization error $ \Delta \bW$ follows a uniform distribution in $[-\phi/2, \phi/2]$, \ie, $\Delta \bW_{ij} \sim U(-\phi/2, \phi/2)$.
Thus, the expectation of the quantization error $\Delta \bW_{ij}$ is: 
\begin{equation} \label{eq:mean}
\mmE(\Delta \bW_{ij}) = \frac{-\phi/2 + \phi/2}{2} = 0.
\end{equation}
The variance of $\Delta \bW_{ij}$ is represented by:
\begin{equation}\label{eq:variance}
Var(\Delta \bW_{ij}) = \frac{(\phi/2 - (-\phi/2))^2}{12} = \frac{a^2}{3\cdot (2^{n} - 1)^2}.
\end{equation}
From \cref{eq:variance}, the variance of the
quantization error is inversely proportional to the precision n.
\end{proof}

\begin{prop} \label{prop:sensitivity_supp}
Considering linear models, let $n$ denote the bit-precision of quantized models, for out-of-distribution (OOD) input perturbations $\delta$,  the quantization-induced loss difference $\Delta \mathcal{L} := \hat{\mathcal{L}}(\mathbf{x}+\delta) - \mathcal{L}(\mathbf{x}+\delta)$ satisfies:
\begin{equation}
\Delta \mL >0, ~~ \text{and} ~~ \Delta \mL \propto \frac{1}{2^{2n}},
\end{equation}
where $\hat{\mathcal{L}}$ and $\mathcal{L}$ denote the MSE losses of quantized and full-precision models, respectively.
\end{prop}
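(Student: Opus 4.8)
The plan is to compute both MSE losses explicitly as expectations over the independent sources of randomness ($\bx$, $\bdelta$, $\Delta\bW$, $\bxi$), subtract, and show the difference is a positive quantity proportional to $\tr(\Sigma_{\bx}+\Sigma_{\bdelta})$ times $\Var(\Delta W_{ij})$, which by Proposition~\ref{prop:variance} scales like $1/(2^n-1)^2 \asymp 1/2^{2n}$. First I would write the full-precision prediction on the OOD input as $\mathcal{L}(\bx+\delta) = \mmE\|\bW(\bx+\bdelta) - \by_g\|^2 = \mmE\|\bxi\|^2$, since $\by_g = \bW\bx + \bxi$ and the model sees $\bx+\bdelta$; more carefully $\bW(\bx+\bdelta)-\by_g = \bW\bdelta - \bxi$, so $\mathcal{L}(\bx+\delta) = \mmE\|\bW\bdelta\|^2 + \mmE\|\bxi\|^2$ using independence and zero means. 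For the quantized model, $\hat{\bW} = \bW - \Delta\bW$, so the residual is $\hat{\bW}(\bx+\bdelta) - \by_g = \bW\bdelta - \bxi - \Delta\bW(\bx+\bdelta)$, and expanding the squared norm, all cross terms involving $\Delta\bW$ vanish in expectation because $\mmE[\Delta W_{ij}] = 0$ and $\Delta\bW$ is independent of everything else.

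The key computation is then $\Delta\mathcal{L} = \mmE\|\Delta\bW(\bx+\bdelta)\|^2$. I would evaluate this entrywise: $\mmE\|\Delta\bW \bz\|^2 = \sum_i \mmE\big[(\sum_j \Delta W_{ij} z_j)^2\big]$ with $\bz = \bx+\bdelta$. Using the i.i.d.\ assumption on the entries of $\Delta\bW$ (so $\mmE[\Delta W_{ij}\Delta W_{ik}] = 0$ for $j\neq k$ and $=\Var(\Delta W_{ij})$ for $j=k$) and independence from $\bz$, this collapses to $\Var(\Delta W_{ij}) \cdot h \cdot \sum_j \mmE[z_j^2] = \Var(\Delta W_{ij}) \cdot h \cdot \tr(\Sigma_{\bx}+\Sigma_{\bdelta})$, where I used $\mmE[\bz\bz^\top] = \Sigma_{\bx} + \Sigma_{\bdelta}$ (cross terms vanish since $\bx \perp \bdelta$, both zero-mean). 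Since $\Sigma_{\bx} \succ 0$ and $\Sigma_{\bdelta}\succ 0$ the trace is strictly positive, and $\Delta\bW \neq \mathbf{0}$ gives $\Var(\Delta W_{ij}) > 0$, so $\Delta\mathcal{L} > 0$. Substituting Proposition~\ref{prop:variance}, $\Delta\mathcal{L} = \frac{h\, a^2}{3(2^n-1)^2}\tr(\Sigma_{\bx}+\Sigma_{\bdelta})$, and since $(2^n-1)^2 = 2^{2n}(1-2^{-n})^2 \sim 2^{2n}$, we get $\Delta\mathcal{L} \propto 1/2^{2n}$ in the stated asymptotic sense.

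I anticipate the main obstacle is not the algebra but making the notion ``$\Delta\mathcal{L} \propto 1/2^{2n}$'' precise and honest: strictly, $1/(2^n-1)^2$ is not exactly proportional to $1/2^{2n}$, only asymptotically so, and the proportionality ``constant'' $\frac{h a^2}{3}\tr(\Sigma_{\bx}+\Sigma_{\bdelta})$ is the only genuinely $n$-independent factor. I would state this as an asymptotic equivalence $\Delta\mathcal{L} \sim C\cdot 2^{-2n}$ with $C = \frac{h a^2}{3}\tr(\Sigma_{\bx}+\Sigma_{\bdelta})$, or note $\Delta\mathcal{L} = \Theta(2^{-2n})$, and remark that the dependence is monotone decreasing in $n$ — which is the substantive claim. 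A secondary subtlety worth a sentence: the loss here is an expectation (population MSE), so ``$\mathcal{L}(\bx+\delta)$'' abbreviates $\mmE_{\bx,\bdelta,\bxi}[\cdot]$, and one should confirm the assumptions (independence of $\Delta\bW$ from $\bdelta$, $\bxi \perp (\bx,\Delta\bW,\bdelta)$) are exactly what is needed to kill every cross term; I would verify each cancellation explicitly rather than wave at ``independence.''
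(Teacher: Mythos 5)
Your proposal is correct and follows essentially the same route as the paper: expand both population MSE losses, kill every cross term involving $\Delta\bW$, $\bdelta$, or $\bxi$ by zero-mean independence, reduce $\Delta\mL$ to $\mmE\|\Delta\bW(\bx+\bdelta)\|^2 = h\,Var(\Delta W_{ij})\,\tr(\Sigma_{\bx}+\Sigma_{\bdelta})$, and invoke the variance formula $Var(\Delta W_{ij})=a^2/(3(2^n-1)^2)$. Your entrywise evaluation of $\mmE\|\Delta\bW\bz\|^2$ is in fact slightly more careful than the paper's, which factors $Tr(\Delta\bW^\top\Delta\bW\,\Sigma)$ as $Tr(\Delta\bW^\top\Delta\bW)\cdot Tr(\Sigma)$ (not a valid identity in general, yielding an extra factor of $w$ in the constant); this does not affect the positivity or the $\Theta(2^{-2n})$ scaling, and your explicit caveat that the proportionality is an asymptotic equivalence is a fair reading of the claim.
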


\begin{proof}
The loss of full-precision models on OOD data is formulated as:
\begin{equation} \label{eq:fp_loss}
\begin{aligned}
\mL(\bx + \delta) & = \mmE[({\by}_g - \by)^2] \\ 
& = \mmE[((\bW \bx + \bxi) - \bW(\bx+ {\bm \delta}))^2]  \\ 
& = \mmE[(\bxi - \bW \bdelta)^2] \\
& = \mmE[\bxi^2 - 2 \bxi^\top \bW \bdelta + (\bW \bdelta)^2] \\
& = \mmE[\bxi^2] - 2 \mmE[\bxi^\top \bW \bdelta] + \mmE[\bdelta^\top \bW^\top \bW \bdelta] \\
& = Var(\bxi) + \mmE[Tr(\bW^\top \bW \bdelta \bdelta^\top)] \\
& = Var(\bxi) + Tr(\bW^\top \bW \Sigma_{\bm \delta}).
\end{aligned}
\end{equation}
where $Var(\bxi)$ is the variance of $\bxi$ and $\Sigma_{\bm \delta}$ is the covariance matrix of $\bdelta$.
The loss of quantized models on OOD data is formulated as:
\begin{equation} \label{eq:quant_loss}
\begin{aligned}
\hat{\mL}(\bx + \delta) &= \mmE[({\by}_g - \hat{\by})^2] \\
&= \mmE[((\bW\bx +  \bxi) - (\bW + \Delta \bW)(\bx + \bdelta))^2] \\
& = \mmE[(\bxi - \bW \bdelta - \Delta \bW \bx - \Delta \bW \bdelta)^2] \\
& = \mmE[\bxi^2 + (\bW\bdelta)^2 + (\Delta \bW\bx)^2 + (\Delta \bW \bdelta)^2 \\ 
& \quad - 2\bxi^\top \bW \bdelta - 2\bxi^\top \Delta \bW \bx - 2\bxi^\top \Delta \bW \bdelta \\
& \quad + 2 (\bW \bdelta)^\top \Delta \bW \bx + 2(\bW \bdelta)^\top \Delta \bW \bdelta + 2(\Delta \bW \bx)^\top \Delta \bW \bdelta] \\
& = \mmE[\bxi^2] + \mmE[(\bW\bdelta)^2] + \mmE[(\Delta \bW\bx)^2] + \mmE[(\Delta \bW \bdelta)^2] \\ 
& \quad - 2 \mmE[\bxi^\top \bW \bdelta] - 2 \mmE[\bxi^\top \Delta \bW \bx] - 2 \mmE[\bxi^\top \Delta \bW \bdelta] \\
& \quad + 2 \mmE[\bdelta^\top \bW^\top \Delta \bW \bx] + 2 \mmE[\bdelta^\top \bW^\top \Delta \bW \bdelta] \\
& \quad + 2 \mmE[\bx^\top \Delta \bW^\top \Delta \bW \bdelta] \\
&= Var(\bxi) + Tr(\bW^\top \bW \Sigma_{\bm \delta}) + Tr(\Delta \bW^\top \Delta\bW (\Sigma_{\bm x} + \Sigma_{\bm \delta})) \\
& \quad - 2 Tr(\bW \mmE[\bdelta] \mmE[\bxi^\top]) - 2\mmE[\bxi^\top] \mmE[\Delta \bW \bx] \\
& \quad - 2 \mmE[\bxi^\top] \mmE[\Delta \bW \bdelta] + 2 \mmE[\delta^\top] \mmE[\bW \Delta \bW \bx] \\
& \quad + 2 Tr(\bW^\top \mmE[\Delta \bW] \mmE[\bdelta \bdelta^\top]) + 2 \mmE[\bx^\top \Delta \bW^\top \Delta \bW] \mmE[\bdelta] \\
& = Var(\bxi) + Tr(\bW^\top \bW \Sigma_{\bm \delta}) + Tr(\Delta \bW^\top \Delta\bW (\Sigma_{\bm x} + \Sigma_{\bm \delta})).
\end{aligned}
\end{equation}
From \cref{eq:fp_loss} and \cref{eq:quant_loss}, the extra loss caused by the quantization process is formulated as:
\begin{equation}\label{eq:extra_loss}
\begin{aligned}
\Delta\mL &= \hat{\mL}(\bx + \bdelta) - \mL(\bx + \bdelta) \\
& = Tr(\Delta\bW^\top \Delta\bW (\Sigma_{\bm x} + \Sigma_{\bm \delta})) \\
& = Tr(\Delta\bW^\top \Delta\bW) \cdot Tr(\Sigma_{\bm x} + \Sigma_{\bm \delta}) \\
& = h \cdot w \cdot Var(\Delta \bW_{ij}) \cdot Tr(\Sigma_{\bm x} + \Sigma_{\bm \delta}).
\end{aligned}
\end{equation}
When test samples are known, the $\Sigma_{\bm x}$ and $\Sigma_{\bm \delta}$ are fixed and can be seen as a positive constant. 
From Proposition \ref{prop:variance}, we put $Var(\Delta \bW_{ij})$ into \cref{eq:extra_loss}, which is formulated as:
\begin{equation}\label{eq:simplified_loss}
\begin{gathered}
\Delta\mL = \frac{h \cdot w \cdot a^2}{3\cdot (2^{n} - 1)^2} \cdot Tr(\Sigma_{\bm x} + \Sigma_{\bm \delta}) = \frac{C}{(2^{n} - 1)^2} > 0.
\end{gathered}
\end{equation}
From \cref{eq:simplified_loss}, we conclude that the extra loss is inversely proportional to precision $n$:
\begin{equation}
\Delta\mL = \frac{C}{(2^{n} - 1)^2}  \propto \frac{1}{2^{2n}}.
\end{equation}
\end{proof}

From Proposition \ref{prop:sensitivity_supp}, the quantization models introduce an extra loss on out-of-distribution data, which is inversely proportional to their data precision n.
Therefore, the quantized models often performs worse on OOD data than their full-precision models, which shows the necessity to adapt the quantized models during test-time.

\begin{figure}
\centering
\includegraphics[width=1.0\linewidth]{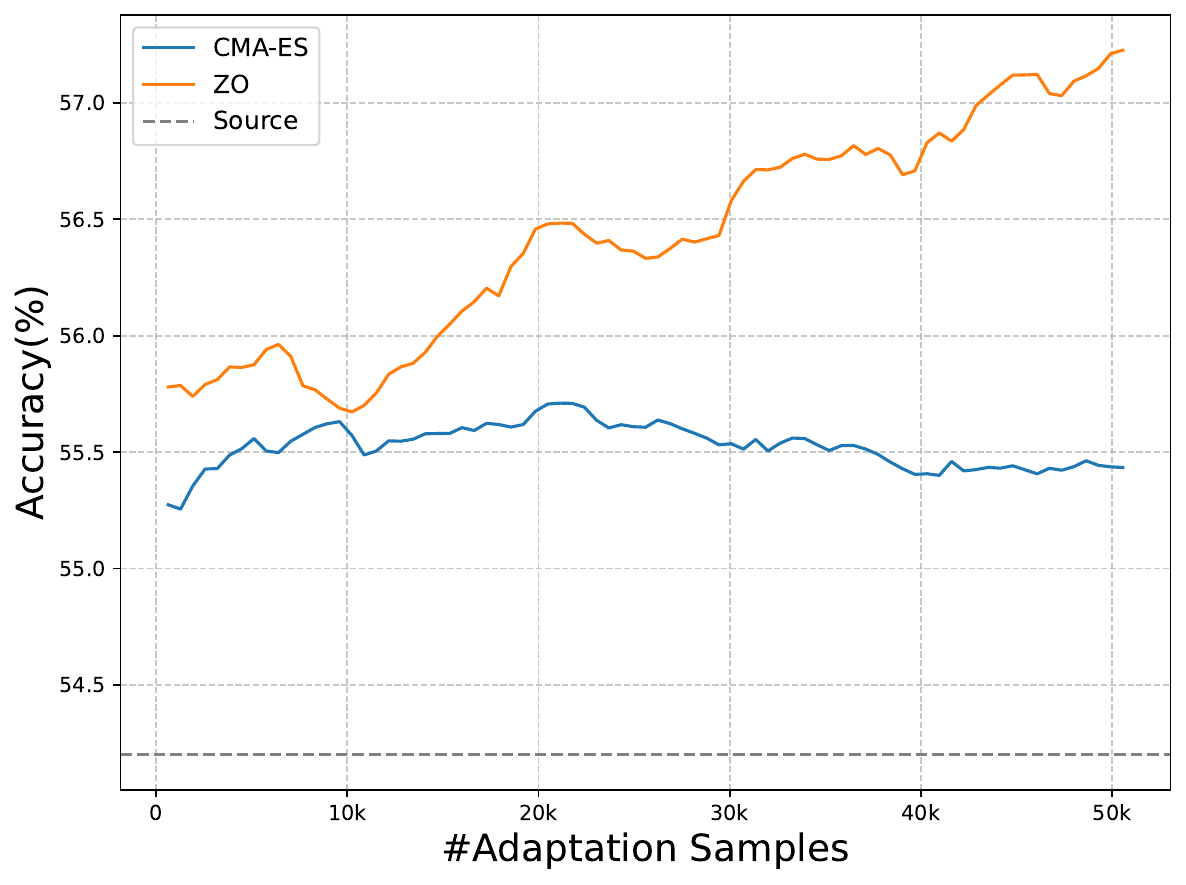}
\caption{
Online accuracy comparison between CMA-ES~\cite{hansen2003reducing} and Zeroth-Order (ZO) Optimizer~\cite{spall1992multivariate} on the quantized W8A8 ViT-B model and ImageNet-C (Gaussian noise, severity level 5). 
Results show that using the same number of test samples, ZO converges faster and achieves significantly better performance than CMA-ES.
}
\label{fig:cma_zo}
\end{figure}

\section{Domain Shift Detection} \label{sec:domain_detect}
During long-term test-time adaptation, it is crucial to detect the domain shift and aggregate the knowledge from different sets of domain knowledge parameters to boost the adaptation of quantized neural networks.
However, we often do not know any prior information about the domain labels for the test data stream.
Thus, it is required to detect the domain change to avoid interference between different domains during the long-term adaptation.
To this end, we utilize the domain shift detection scheme~\cite{hong2023mecta, chen2024crossdevice} to recognize the domain change.
Specifically, we first compute the distribution of the current domain using the statistics of the stem layer:
\begin{equation}\label{eq:domain_distribution}
\phi^{t-1}_d = \beta \phi_{t-1} + (1-\beta) \phi^{t-2}_d
\end{equation}
where $\phi_{t-1}$ comprises the mean and standard deviation calculated over the stem layer of the model $f_{{\bm \theta}^0}$ for the $(t-1)$-th batch of test samples.
$\beta=0.8$ is a moving average factor.
We then measure the distance between the current test samples and the current domain, which is formulated as:
\begin{equation}\label{eq:domain_distance}
\begin{gathered}
D(\phi_d^t, \phi_t) = \frac{1}{H}\sum_{i=1}^H KL(\phi^t_{d,i}|| \phi_{t,i}) + KL( \phi_{t,i}||\phi^t_{d,i}), \\
KL(\phi_1||\phi_2) = \frac{1}{2\sigma_2^2}(\sigma_1^2+(\mu_1-\mu_2)^2),
\end{gathered}
\end{equation}
where $H$ denotes the dimension of statistics.
Once the distance is larger than a predefined threshold $\tau$ (\eg, 0.1), the domain change occurs, \ie, the current domain shifts to another domain. 
At this moment, we store the domain knowledge parameters $\Delta_t$ and add a new set of learnable parameters $\tilde{\bm \theta}'$.

\begin{table*}[th]
\caption{Effectiveness of our \ourname ~on more quantized models, including ViT-L W8A8, ViT-B W3A3 and ResNet50 W2A4 models.
We report the \textbf{Accuracy (\%)} on ImageNet-C (severity level 5) during the 10th round of long-term continual adaptation.
``WNAN'' indicates that the weight and activation of deep models are quantized to N-bit precision, respectively.}
\label{tab:10round_more_qnn}
\centering
\LARGE
\resizebox{1.0\linewidth}{!}{
\begin{tabular}{llccccccccccccccc>{\columncolor{black!8}}c}
\toprule
\multicolumn{1}{c}{} & \multicolumn{1}{c}{}& \multicolumn{3}{c}{Noise} & \multicolumn{4}{c}{Blur} & \multicolumn{4}{c}{Weather} & \multicolumn{4}{c}{Digital} & \multicolumn{1}{c}{Average} \\
\midrule
 Model & Method & Gauss. & Shot & Impul. & Defoc. & Glass & Motion & Zoom & Snow & Frost & Fog & Brit. & Contr. & Elas. & Pix. & JPEG & Acc.  \\   
\midrule
\multirow{4}{*}{ViT-L W8A8} & Source  &         61.5  & 60.4  & 62.0  & 52.4  & 45.3  & 60.3  & 55.4  & 65.9  & 61.8  & 62.4  & 79.7  & 39.2  & 55.1  & 74.3  & 72.6  & 60.6  \\  
& T3A~\cite{iwasawa2021test}  & 53.3  & 52.5  & 53.4  & 46.8  & 43.2  & 56.0  & 51.9  & 61.9  & 59.5  & 57.9  & 76.2  & 36.8  & 52.9  & 70.2  & 68.0  & 56.0  \\  
&  FOA~\cite{niu2024testtime} (K=2) & 62.1  & 60.7  & 62.2  & 55.3  & 46.4  & 62.1  & 56.8  & 68.0  & 64.3  & 66.1  & 80.5  & 42.0  & 58.0  & 75.1  & 74.1  & 62.2  \\
&  \ourname ~(ours)  &  64.8  & 64.8  & 66.2  & 58.6  & 50.2  & 64.1  & 59.4  & 69.5  & 66.6  & 67.2  & 80.7  & 51.2  & 58.8  & 76.4  & 73.8  & \textbf{64.8} \\
\midrule
\multirow{4}{*}{ViT-B W3A3} &  Source  & 13.3  & 12.5  & 12.5  & 25.7  & 17.5  & 24.3  & 25.7  & 25.9  & 35.4  & 39.0  & 63.9  & 4.6  & 28.0  & 37.6  & 51.4  & 27.8  \\ 
& T3A~\cite{iwasawa2021test} & 12.3  & 11.5  & 11.2  & 23.2  & 16.7  & 22.3  & 23.7  & 23.5  & 35.0  & 35.3  & 61.4  & 4.4  & 26.7  & 34.3  & 48.4  & 26.0 \\ 
& FOA~\cite{niu2024testtime} (K=2)   & 14.6  & 14.0  & 14.1  & 27.1  & 18.7  & 27.4  & 28.1  & 31.6  & 37.4  & 43.3  & 64.1  & 8.2  & 31.0  & 39.4  & 52.5  & 30.1 \\
& \ourname ~(ours)  & 20.8  & 21.5  & 24.6  & 30.6  & 25.1  & 32.6  & 33.5  & 34.7  & 40.1  & 43.4  & 65.5  & 8.3  & 34.3  & 43.3  & 54.0  & \textbf{34.2} \\
\midrule
\multirow{4}{*}{ResNet50 W2A4} &  Source & 0.4  & 0.6  & 0.4  & 2.9  & 2.9  & 3.3  & 6.3  & 1.2  & 2.4  & 2.3  & 16.1  & 0.3  & 6.1  & 5.4  & 8.7  & 3.9 \\ 
& BN Adapt  & 4.7  & 5.1  & 4.9  & 5.9  & 5.8  & 10.0  & 16.6  & 12.0  & 11.2  & 19.0  & 38.6  & 4.3  & 22.0  & 20.8  & 18.2  & 13.3 \\
& T3A~\cite{iwasawa2021test} & 8.6  & 7.9  & 9.4  & 7.2  & 7.8  & 13.6  & 21.5  & 19.2  & 16.9  & 28.5  & 41.5  & 6.3  & 25.9  & 26.3  & 25.7  & 17.8 \\ 
& FOA~\cite{niu2024testtime} (K=2)   & 1.2  & 0.9  & 0.9  & 1.1  & 0.9  & 1.7  & 3.7  & 5.7  & 4.5  & 10.3  & 14.5  & 1.4  & 3.8  & 5.5  & 7.3  & 4.2 \\
& \ourname ~(ours)  & 13.2  & 13.7  & 13.6  & 10.2  & 11.8  & 17.1  & 26.1  & 21.8  & 21.7  & 33.5  & 48.3  & 9.1  & 33.6  & 35.2  & 34.1  & \textbf{22.9} \\ 
\bottomrule
\end{tabular}
}
\end{table*}

\begin{table*}[th]
\centering
\caption{Effectiveness of our \ourname ~on more quantized models in long-term continual adaptation. 
We report the average \textbf{Accuracy (\%)} on ImageNet-C (severity level 5) at each round of adaptation. 
``WNAN'' indicates that the weight and activation of models are quantized to N-bit precision, respectively.
The \textbf{bold} number indicates the best result.
``\#FP'' is the number of forward passes to obtain output and update models.
}
\fontsize{6pt}{8pt}\selectfont
\resizebox{0.88\linewidth}{!}{
\setlength{\heavyrulewidth}{0.3pt}
\begin{tabular}{llccccccccccc>{\columncolor{black!8}}c}
\toprule
Models & Methods & \#FP & 1 & 2 & 3 & 4 & 5 & 6 & 7 & 8 & 9 & 10 & Average \\ 
\midrule
\multirow{4}{*}{ViT-L W8A8} & Source & 1 & 60.6  & 60.6  & 60.6  & 60.6  & 60.6  & 60.6  & 60.6  & 60.6  & 60.6  & 60.6  & 60.6  \\ 
& T3A~\cite{iwasawa2021test} & 1 & 58.3  & 58.1  & 57.8  & 57.5  & 57.2  & 56.9  & 56.8  & 56.6  & 56.4  & 56.0  & 57.2  \\ 
& FOA~\cite{niu2024testtime} (K=2) & 2 & 62.2  & 62.1  & 62.2  & 62.2  & 62.2  & 62.2  & 62.1  & 62.1  & 62.2  & 62.2  & 62.2 \\ 
& ZOA (ours) & 2 & 62.5  & 62.6  & 63.1  & 63.2  & 63.6  & 64.0  & 64.1  & 64.5  & 64.2  & 64.8  & \textbf{63.7} \\
\midrule
\multirow{4}{*}{ViT-B W3A3} & Source & 1 & 27.8  & 27.8  & 27.8  & 27.8  & 27.8  & 27.8  & 27.8  & 27.8  & 27.8  & 27.8  & 27.8  \\ 
& T3A~\cite{iwasawa2021test} & 1 & 27.0  & 27.3  & 27.0  & 26.9  & 26.8  & 26.7  & 26.6  & 26.4  & 26.3  & 26.0  & 26.7  \\ 
& FOA~\cite{niu2024testtime} (K=2) & 2 & 30.0  & 29.9  & 29.8  & 29.9  & 29.8  & 29.9  & 29.9  & 29.9  & 30.0  & 30.1  & 29.9  \\ 
& ZOA (ours) & 2 & 31.9  & 33.9  & 34.9  & 34.5  & 35.1  & 35.2  & 34.8  & 35.1  & 34.6  & 34.2  & \textbf{34.4} \\
\midrule
\multirow{4}{*}{ResNet50 W2A4} & Source & 1 & 3.9  & 3.9  & 3.9  & 3.9  & 3.9  & 3.9  & 3.9  & 3.9  & 3.9  & 3.9  & 3.9  \\ 
& BN Adapt & 1 & 13.3  & 13.3  & 13.3  & 13.3  & 13.3  & 13.3  & 13.3  & 13.3  & 13.3  & 13.3  & 13.3  \\ 
& T3A~\cite{iwasawa2021test} & 1 & 19.9  & 20.2  & 19.7  & 19.3  & 19.0  & 18.7  & 18.4  & 18.1  & 17.9  & 17.8  & 18.9  \\ 
& FOA~\cite{niu2024testtime} (K=2) & 2 & 5.2  & 5.8  & 6.1  & 5.3  & 4.0  & 4.3  & 4.4  & 4.2  & 4.3  & 4.2  & 4.8 \\ 
& ZOA (ours) & 2 & 21.6  & 22.1  & 22.4  & 22.6  & 22.7  & 22.8  & 22.9  & 22.9  & 22.8  & 22.9  & \textbf{22.6} \\ 
\bottomrule
\end{tabular}
}
\label{tab:long_term_more_qnn}
\end{table*}

\section{Initialization of New Aggregation Weight}
\label{sec:initialization}
In our zeroth-order adaptation framework, we use a set of coefficients $\balpha$ to aggregate the knowledge of different domains. 
To keep the magnitude of the united parameters $\bm \theta$ stable during adaptation, we use the softmax function to ensure the sum of all coefficients is 1, \ie, $\sum_{j=0}^N \alpha_j = 1$.
Besides, we introduce a scaling temperature $T=10$ to adjust the sharpness of $\balpha$ during adaptation, enable the focus of aggregation on the most relevant domain knowledge.
Thus, we formulate $\balpha$ as $\balpha = softmax(\bbeta \cdot T)$, where $\bbeta$ is the logit vectors.

Once we detect that the domain change occurs, we store the current parameters in $\mT$, which is formulated as:
\begin{equation}\label{eq:store_domain}
\mT = \mT \cup \{\Delta_t \}, ~ ~~ \text{where} ~~ \Delta_t = {\bm \theta}_t - {\bm \theta}^0.
\end{equation}
Here, ${\bm \theta}_t$ denotes the ensemble parameters of the current domain and ${\bm \theta}^0$ is the parameters of the source model.
Then, we initialize a new coefficient $\alpha_t$ before the softmax operation for the new domain learning as follows:
\begin{equation} \label{eq:init_alpha}
\alpha_t = T \cdot \beta_{t} = \ln{\left((s-1)\sum\limits_{j=0}^{n}e^{\beta_{j}T}\right)},~s=\max\{1, \frac{m}{w_m}\},
\end{equation}
where $n=|\mT|$ is the number of domain parameters stored in $\mT$.
$m =\textrm{max}\{\frac{1}{J}\sum_{j=1}^{J}|\Delta^{(,j)}|\}_{=1}^{L}$ denotes the maximum magnitude of parameters across different layers where $J$ is the number of parameters in each layer.
With the {$\alpha_t$}, the magnitude of the recalculated $\tilde{\bm \theta}'$ in Eq. (8) is not larger than a constrained value $w_m = 0.01$.

\section{More Implementation Details}
\label{sec:details}

\noindent
\textbf{Test Data.}
The \textbf{ImageNet-C} dataset encompasses 15 distinct corruption types of 4 main groups, including Gaussian noise, shot noise, impulse noise, defocus blur, glass blur, motion blur, zoom blur, snow, frost, fog, brightness, contrast, elastic transformation, pixelation, and JPEG compression. 
Each corruption type is characterized by 5 different levels of severity, with higher severity levels indicating a more severe distribution shift.
In the experiments, we abbreviate the 15 fields as Gauss., Shot, Impul., Defoc., Glass, Motion, Zoom, Snow, Frost, Fog, Brit., Contr., Elas., Pix., and JPEG, respectively.
In the experiments, we adapt the models for every $B=64$ test samples.
In long-term adaptation, we continually adapt the quantized model to 15 corruptions over 10 rounds, so there are 150 corruptions in total.
\textbf{ImageNet-R}~\cite{hendrycks2021many} contains 30,000 images featuring diverse artistic renditions of 200 ImageNet classes. These images are predominantly sourced from Flickr and filtered by Amazon MTurk annotators.
\textbf{ImageNet-Sketch}~\cite{wang2019learning} consists of 50,899 images represented as black and white sketches, encompassing 1000 ImageNet classes. Each class contains approximately 50 images.

\noindent
\textbf{Implementation Details of FOA Loss.}
Following FOA~\cite{niu2024testtime}, we compute the mean and variants of CLS tokens for ViT-B and ViM-S models. 
As for Resnet-50 without CLS tokens, we use the global average pooling operation to obtain the features of each block with a shape of $(M, C)$ and then compute the mean and standard deviations over the dimension of $M$.
Following FOA~\cite{niu2024testtime}, we use the validation set of ImageNet-1K to compute the statistics of ID data for quantized model adaptation.

\noindent
\textbf{Implementation Details of ViT-B.}
We sample the perturbation vectors $\bepsilon$ and $\bnu$ from a Rademacher and Segmented Uniform distribution~\cite{oh2023blackvip}.
We perturb the learnable parameters of ViT-B models with a step size of 0.02.
The step size of the coefficients of different domain parameters is set to 0.05.
With the estimated gradient, we use the SGD optimizer with a weight decay of 0.4 to update the parameters $\tilde{\bm \theta}$ and use the AdamW~\cite{loshchilov2018decoupled} optimizer with a weight decay of 0.1 to update the coefficients $\bm \alpha$. 
For the experiment on resource-abundant devices, we store the domain knowledge for every 30 batches of test samples.
For a fair comparison, we remove all the stored parameters to reset the overall parameters to the pre-trained parameters after the adaptation on each domain.
During adaptation for quantized ViT-B models, we fix the parameters of the LayerNorm layers of the first block and the last three blocks.
We set the maximum number of domain knowledge parameters as $N=32$ to avoid large memory consumption.
We set the learning rate of $\alpha$ to be $0.01$ for all quantized ViT-B models.
For the W8A8 ViT-B model, we set the learning rate of $\tilde{\bm \theta}$ be $0.0005$.
For the W6A6 ViT-B model, we set the learning rate of $\tilde{\bm \theta}$ be $0.0002$.
For both W4A4 and W3A3 models, we set the learning rate to be $0.00005$.
We set the regularization term $\lambda$ = 30 to balance feature distribution alignment and prediction consistency.

\noindent
\textbf{Implementation Details of ResNet-50.}
We sample the perturbation vectors $\bm\epsilon$ and $\bm\nu$ from a Rademacher and Segmented Uniform distribution~\cite{oh2023blackvip}.
For the quantized W8A8 ResNet-50 model, we configure the base learning rate as 0.0001 for parameter updates and perturb the parameters with a step size of $c=0.01$ during adaptation.
{For the W2A4 ResNet50 model, we set the learning rate to be 0.00005 with the step size of $c=0.01$.}
The aggregation coefficients  $\alpha$ (see Eq. (4)) are optimized with a learning rate of 0.01 and a perturbation step size of 0.05. 
With the estimated gradient, we use the SGD optimizer with a weight decay of 0.4 to update the parameters $\tilde{\bm\theta}$ and use the AdamW~\cite{loshchilov2018decoupled} optimizer with a weight decay of 0.1 to update the coefficients $\bm\alpha$. 
To balance feature alignment and prediction consistency for ResNet without CLS tokens like ViT-B and ViM-S models, we set the balance factor $\lambda=1$ in Eq. (6). 
During adaptation for quantized ViM-S models, we fix the parameters of all BatchNorm layers of the first two blocks and the last six blocks, as well as the third BatchNorm layers of each block and those in the downsampling blocks.

\noindent
\textbf{Implementation Details of ViM-S.} 
For the quantized ViM-S model, we set a base learning rate of 0.0005 for parameter updates and apply perturbations with a step size of $c=0.03$ during adaptation.
We sample the perturbation vectors $\bm\epsilon$ and $\bm\nu$ from a Rademacher and Segmented Uniform distribution~\cite{oh2023blackvip}.
The aggregation coefficients $\bm\alpha$ are optimized with a learning rate of 0.01 and a perturbation step size of 0.03 to balance domain knowledge fusion.
With the estimated gradient, we use the SGD optimizer with a weight decay of 0.4 to update the parameters $\tilde{\bm \theta}$ and use the AdamW~\cite{loshchilov2018decoupled} optimizer with a weight decay of 0.1 to update the coefficients $\bm\alpha$. 
To stabilize training, we configure the weight of feature alignment regularization term $\lambda=30$ to balance feature distribution alignment and prediction consistency.
During adaptation for quantized ViM-S models, we fix the parameters of the RMSNorm layers of the first two blocks and the last six blocks.

\section{More Results}
\label{sec:more_results}

\subsection{Effectiveness of Our DKM}
We further investigate the effectiveness of our domain knowledge management scheme in \cref{tab:domain_management}.
To mimic the long-term adaptation that the domain changes thousands of times, we store the domain knowledge parameters after encountering every $30$ batch of test samples in the experiment.
Without our domain knowledge management (DKM),
there is a total of 4,015 sets of parameters after long-term continual adaptation.
Instead, equipped with our DKM, our method achieves a comparable performance while only storing 32 sets of domain parameters. 
We also compare with two baselines.
1) \textbf{Dequeue} represents that we discard one set of domain parameters from the stored set $\mT$ with the smallest index.
2) \textbf{Random} represents that we randomly discard one set of domain parameters from $\mT$.
As shown in \cref{tab:domain_management}, our \ourname ~achieves higher adaptation performance than the dequeue and random discard operations.
Moreover, our ~\ourname ~also achieves a comparable adaptation performance compared with the ``save all'' scheme with only 0.8\% memory consumption.
To sum up, these experimental results demonstrate the effectiveness and high efficiency of our domain knowledge management scheme.

\begin{table}[!ht]
\centering
\caption{Comparisons of different methods to manage the stored domain knowledge parameters.
``Save all'' indicates the variant method that preserves all the domain knowledge parameters.
``Dequeue'' denotes the variant that discards one set of parameters from $\mT$ with the smallest index.
``Random'' is the variant that randomly discards one set of parameters.
}
\resizebox{1.0\linewidth}{!}{
\begin{tabular}{cccc}
\toprule
    Method & \#Domain Parameters & Memeory (MB) & Acc. \\ 
    \midrule
    Source & - & - & 54.2  \\ 
    Save All & 4,015 & 86.2  & 62.2 \\ 
    Dequeue & 32 & 0.7 & 50.2 \\
    Random & 32 & 0.7 & 60.3 \\
    DKM (Ours) & 32 & 0.7 & 61.3 \\ 
    \bottomrule
\end{tabular}
}
\label{tab:domain_management}
\end{table}



\subsection{Effect of Different Perturbation Scale}
We investigate the effect of different perturbation scales $c$ on the W8A8 ResNet50 model in \cref{tab:effect_c}. 
In this experiment, we keep the perturbation of $\bm\alpha$ unchanged, and only change the perturbation scale of the learnable $\tilde{\bm \theta}$.
Compared with BN Adapt ($31.6\%$) and FOA~\cite{niu2024testtime} ($26.4\%$), our ZOA remains effective in the range of $c \in [0.005, 0.03]$. In our experiments, we set $c=0.01$ for the W8A8 ResNet-50 model.

\begin{table}[th]
\centering
\caption{Effect of different perturbation scales $c$ at the 10th round adaptation on ImageNet-C}
\begin{tabular}{cccccc}
\toprule
c & 0.05 & 0.03 & 0.02 & 0.01 & 0.005 \\ 
\midrule
Accuracy (\%) & 30.7  & 33.6  & 34.6  & \textbf{35.0}  & 33.7 \\ 
\bottomrule
\end{tabular}
\label{tab:effect_c}
\end{table}

\subsection{Effect of Perturbation Distribution}
We compare the results of ZOA using Normal Gaussian (Gauss.) or Rademacher and Segmented Uniform (RSU) distributions for sampling perturbations. Results of \cref{tab:effect_dist} show that RSU works well on the W8A8 ViT-B model ($62.8\%$ vs. $61.6\%$), while Gauss. works well on the W8A8 ResNet-50 model ($35.8\%$ vs. $35.0\%$). For simplicity, we use RSU for all experiments without careful tuning.

\begin{table}[th]
\centering
\caption{Results of ZOA using Normal Gaussian (Gauss.) or Rademacher and Segmented Uniform (RSU) distributions for sampling perturbations.}
\begin{tabular}{ccc}
\toprule
Model & Gauss. & RSU \\ 
\midrule
ViT-B W8A8 & 61.6  & \textbf{62.8} \\ 
ResNet 50 W8A8 & \textbf{35.8}  & 35.0 \\ 
\bottomrule
\end{tabular}
\label{tab:effect_dist}
\end{table}

\subsection{Effect of Different Initialization of $\alpha_t$}
When encouter a new domain, we use \cref{eq:init_alpha} add a new $\alpha_t$ into $\boldsymbol \alpha$ for the new stored domain paramters $\Delta \boldsymbol{\theta}_t$. 
Notably, suboptimal initialization would indeed impede adaptation. If $\alpha_t$ is initialized to be very large ($\alpha_t \to 1$), the distribution of $\boldsymbol{\alpha}$ is extremely sharp, hindering effective knowledge selection from existing domains (35.0\% vs. 19.6\%). If $\alpha_t$ is too small ($\alpha_t \to 0$), the magnitude of the initialized $\Delta \tilde{\theta}'$ via Eq. (8) is almost the same as the learned $\Delta \tilde{\theta}$ of the previous domain, allowing irrelevant historical knowledge to distort current domain learning ($35.0\%$ vs. $29.9\%$). If $\alpha_t$ is randomly initialized, the interference between the previous domain and the current domain may still exist, leading to a larger variance in adaptation performance (35.0\% vs. 32.5\%).

\begin{table*}[!ht]
\caption{Comparisons with SOTA methods on ImageNet-C (severity level 5) with ViT-B regarding \textbf{Accuracy (\%)}. \textbf{BP} is short for \textbf{backward propagation} and the \textbf{bold} number indicates the result of our method.}
\label{tab:fp_vit}
\newcommand{\tabincell}[2]{\begin{tabular}{@{}#1@{}}#2\end{tabular}}
\centering
\LARGE
\resizebox{1.0\linewidth}{!}{
\begin{tabular}{lcccccccccccccccc>{\columncolor{black!8}}c}
\toprule
\multicolumn{1}{c}{} & \multicolumn{1}{c}{}& \multicolumn{3}{c}{Noise} & \multicolumn{4}{c}{Blur} & \multicolumn{4}{c}{Weather} & \multicolumn{4}{c}{Digital} & \multicolumn{1}{c}{Average} \\
\midrule
Method & BP & Gauss. & Shot & Impul. & Defoc. & Glass & Motion & Zoom & Snow & Frost & Fog & Brit. & Contr. & Elas. & Pix. & JPEG & Acc. \\

\midrule
Source & \ding{55} & 56.8  & 56.8  & 57.5  & 46.9  & 35.6  & 53.1  & 44.8  & 62.2  & 62.5  & 65.7  & 77.7  & 32.6  & 46.0  & 67.0  & 67.6  & 55.5   \\ 
LAME~\cite{boudiaf2022parameter} & \ding{55} & 56.5  & 56.5  & 57.2  & 46.4  & 34.7  & 52.7  & 44.2  & 58.4  & 61.5  & 63.1  & 77.4  & 24.7  & 44.6  & 66.6  & 67.2  & 54.1  \\
T3A~\cite{iwasawa2021test} & \ding{55} & 56.4  & 56.9  & 57.3  & 47.9  & 37.8  & 54.3  & 46.9  & 63.6  & 60.8  & 68.5  & 78.1  & 38.3  & 50.0  & 67.6  & 69.1  & 56.9    \\ 
FOA~\cite{niu2024testtime}  & \ding{55}    & 61.5  & 63.2  & 63.3  & 59.3  & 56.7  & 61.4  & 57.7  & 69.4  & 69.6  & 73.4  & 81.1  & 67.7  & 62.7  & 73.9  & 73.0  & 66.3 \\
TENT~\cite{wang2021tent} & \ding{52} & 60.3  & 61.6  & 61.8  & 59.2  & 56.5  & 63.5  & 59.2  & 54.3  & 64.5  & 2.3  & 79.1  & 67.4  & 61.5  & 72.5  & 70.6  & 59.6    \\ 
CoTTA~\cite{wang2022continual} & \ding{52} & 63.6 & 63.8 & 64.1 & 55.5 & 51.1 & 63.6 & 55.5 & 70.0 & 69.4 & 71.5 & 78.5 & 9.7 & 64.5 & 73.4 & 71.2 & 61.7 \\
EATA~\cite{niu2022efficient} & \ding{52} & 62.0  & 65.5  & 65.5  & 59.3  & 60.7  & 65.0  & 63.8  & 69.2  & 68.8  & 73.4  & 80.3  & 58.4  & 68.9  & 74.4  & 73.6  & 67.2 \\
SAR~\cite{niu2023towards}  & \ding{52}    & 59.2  & 60.5  & 60.7  & 57.5  & 55.6  & 61.8  & 57.6  & 65.9  & 63.5  & 69.1  & 78.7  & 45.7  & 62.4  & 71.9  & 70.3  & 62.7    \\ 
DeYO~\cite{lee2024entropy}  & \ding{52}    & 59.8  & 61.5  & 61.1  & 57.4  & 59.0  & 64.5  & 61.9  & 69.1  & 66.7  & 69.5  & 78.9  & 65.3  & 69.6  & 74.0  & 72.3  & 66.0   \\
\midrule
\ourname ~(Ours)  & \ding{55}    & 61.6  & 63.1  & 63.5  & 59.7  & 59.0  & 64.9  & 62.6  & 70.4  & 68.4  & 74.0  & 80.6  & 67.1  & 69.0  & 74.7  & 73.2  & \textbf{67.5} \\
\bottomrule
\end{tabular}
}
\end{table*}

\subsection{Results on More Quantized Models}
In this part, we further conduct additional experiments on the larger model (quantized ViT-L W8A8 model), the ViT-B model with lower bit precision (W3A3), and the ResNet50 model with hybrid precision (W2A4).
Using SPSA~\cite{spall1992multivariate} for gradient estimation, our ZOA does not limit the size of model parameters and can effectively scale to larger models.
As shown in \cref{tab:10round_more_qnn} and \cref{tab:long_term_more_qnn}, our \ourname ~outperforms the compared baselines by a large margin on the W8A8 ViT-L model, \ie, +4.2\% (Ours) vs. +1.6\% (FOA).
Moreover, our ZOA achieves a 4.1\% improvement over FOA (34.2\% vs. 30.1\%) on the W3A3 ViT-B model and a 5.1\% improvement over T3A (22.9\% vs. 17.8\%) on the W2A4 ResNet50 model, further demonstrating our effectiveness on quantized models with lower bit precision or hybrid precision.

\begin{table}[!th]
\centering
\caption{Results on ImageNet-R and ImageNet-Sketch.}
\label{tab:result_others}
\begin{tabular}{cccc}
\toprule
Model & Method & ImageNet-R & ImageNet-Sketch \\ 
\midrule
\multirow{5}{*}{ResNet50 W8A8} & Source & 36.2  & 24.0  \\ 
~ & BN Adapt & 39.6  & 26.3  \\ 
~ & T3A & 18.3  & 20.5  \\ 
~ & FOA (K=2) & 35.2  & 25.6  \\ 
~ & ZOA (ours) & \textbf{40.5}  & \textbf{27.1}  \\ 
\midrule
\multirow{4}{*}{ViT-B W8A8} & Source & 58.4  & 44.1  \\ 
~ & T3A & 30.8  & 35.5  \\ 
~ & FOA (K=2) & 58.9  & 46.1  \\ 
~ & ZOA (ours) & \textbf{62.6}  & \textbf{48.7} \\ 
\bottomrule
\end{tabular}
\end{table}

\subsection{Results on More Adaptation Datasets}
In this part, we further evaluate our methods on two common adaptation benchmarks, including ImageNet-R~\cite{hendrycks2021many} and ImageNet-Sketch~\cite{wang2019learning} datasets. 
In this experiment, we adapt the quantized model, including the W8A8 ViT-B model and the W8A8 ResNet50 model, for 10 rounds across these two datasets.
As shown in \cref{tab:result_others}, our ZOA consistently achieves better results, further suggesting our effectiveness across different adaptation datasets.

\subsection{Reasonability of Using a Limited Capacity}
A limited capacity N works well for test-time adaptation since: \textbf{1)} Relevant domains often share distribution similarities, making \textbf{learned knowledge transferable}. For instance, a ViT-B model adapted to Gaussian noise with TENT~\cite{wang2021tent} significantly improves the performance on Shot noise compared to the source model (62.8\% vs. 56.9\%). This mitigates the need to store domain-specific parameters for every domain. \textbf{2)} Our DKM measures similarity across domains and retains unique knowledge by only \textbf{discarding redundant ones that show high similarities}. This ensures that unique knowledge is preserved even under tight memory constraints.

\subsection{More Results of the 32FP ViT-B Model}
To explore the effectiveness of our \ourname ~on resource-abundant devices, we further evaluate our method with state-of-the-art BP-based methods. 
To this end, we use more forward passes as FOA~\cite{niu2024testtime} to achieve better results on the ViT-B 32FP model.
Besides, the parameters are reset to the pre-trained models after adaptation on each domain.
As shown in \cref{tab:fp_vit} of Appendix, our \ourname ~achieves an even better performance than many classical BP-based methods, such as TENT~\cite{wang2021tent}, SAR~\cite{niu2023towards}, CoTTA~\cite{wang2022continual}, and DeYO~\cite{lee2024entropy}.
These results demonstrate that our \ourname ~also has a great potential to be applied on resource-abundant devices or time-insensitive applications.
We also compare our ZOA with EATA in the continual TTA setting. As shown in \cref{tab:long_term_fp}, our ZOA achieves significantly better performance than EATA with much less memory consumption (only about 15\% memory consumption of EATA).

\begin{table}[!ht]
\centering
\caption{Comparisons with EATA using the 32FP ViT-B model in long-term adaptation on ImageNet-C.}
\label{tab:long_term_fp}
\begin{tabular}{ccccc}
\toprule
Method & BP & Memory & R1 & R10 \\ 
\midrule
Source & \ding{55} & 819 & 55.5 & 55.5 \\ 
EATA & \ding{52} & 5506 & 66.7  & 52.7  \\ 
ZOA (ours) & \ding{55} & 827 & \textbf{67.1}  & \textbf{67.7} \\ 
\bottomrule
\end{tabular}
\end{table}



\end{document}